\documentclass[letterpaper, journal]{IEEEtran}

\usepackage{amsmath,amsthm,graphicx,amssymb,fullpage,wrapfig}

\usepackage{setspace}

\usepackage{amsmath,amsthm,amssymb,algorithmic,algorithm,hyperref}
\usepackage[margin=1in]{geometry}
\usepackage{setspace}
\usepackage{color}
\usepackage{soul}
\usepackage{epstopdf}
\usepackage{cite}

\theoremstyle{remark}

\addtolength{\topmargin}{-0.2in}         
\addtolength{\textheight}{.4in}
\addtolength{\oddsidemargin}{-0.25in}    
\addtolength{\textwidth}{0.55in}         

\hypersetup{colorlinks=true}




\newcommand{\ben}{\begin{eqnarray}}

\newcommand{\een}{\end{eqnarray}}

\newcommand{\transpose}{^{\intercal}}
\newtheorem{thm}{Theorem}
\newtheorem{corollary}{Corollary}
\newtheorem{lemma}{Lemma}

\newcommand{\ka}{\kappa}

\newcommand{\M}{{M}}

\newcommand{\e}{{e}}


\def\yc{\textcolor{black}}
\def\yang{\textcolor{black}}

%


\newtheorem{remark}{Remark}

\bibliographystyle{ieee}



\title{Sketching for Sequential Change-Point Detection}
%
%
%
%
\author{Yang Cao\thanks{Yang Cao
    (Email: caoyang@gatech.edu) and Yao Xie (Email: yao.xie@isye.gatech.edu)
   are with the H. Milton Stewart School of
    Industrial and Systems Engineering, Georgia Institute of
    Technology, Atlanta, GA, USA. Andrew Thompson (Email: thompson@maths.ox.ac.uk) is with the Mathematical Institute, University of Oxford, Oxford, UK. Meng Wang (Email: wangm7@rpi.edu) is with the Department of Electrical, Computer \& Systems Engineering, Rensselaer Polytechnic Institute, Troy, NY, USA.
    Authors are listed alphabetically. This paper was presented [in part] at the  GlobalSIP 2015 \cite{xie2015sketching}.}, Andrew Thompson, Meng Wang, and Yao Xie
}
\begin{document}
%
\maketitle
\begin{abstract}
We study sequential change-point detection procedures based on linear sketches of high-dimensional signal vectors using generalized likelihood ratio (GLR) statistics. The GLR statistics allow for an unknown post-change mean that represents an anomaly or novelty. We consider both fixed and time-varying projections, derive theoretical approximations to two fundamental performance metrics: the average run length (ARL) and the expected detection delay (EDD); these approximations are shown to be highly accurate by numerical simulations. \yang{We further characterize the relative performance measure of the sketching procedure compared to that without sketching and show that there can be little performance loss when the signal strength is sufficiently large, and enough number of sketches are used.} Finally, we demonstrate the good performance of sketching procedures using simulation and real-data examples on solar flare detection and failure detection in power networks.
\end{abstract}
%

\section{Introduction}

Online change-point detection from high-dimensional streaming data is a fundamental problem arising from applications such as real-time monitoring of sensor networks, computer network anomaly detection and computer vision (e.g., \cite{changepoint_new_book2014, PoorHadjiliadis2008}). To reduce data dimensionality, a conventional approach is \emph{sketching} (see, e.g., \cite{sketchingWoodruff2014}), which performs random projection of the high-dimensional data vectors into lower-dimensional ones. \yang{Sketching has now been widely used in signal processing and machine learning to reduce dimensionality and algorithm complexity, and achieve various practical benefits \cite{dasarathy2012covariance, chi2016kronecker, wang2015fast, alaoui2015fast, bachrach2009sketching, raskutti2015statistical, indyk2008explicit}.}

We consider change-point detection using linear sketches of high-dimensional data vectors. \yang{Sketching reduces the computational complexity of the detection statistic from $\mathcal O(N)$ to $\mathcal O(M)$, where $N$ is the original dimensionality and $M$ is the dimensionality of sketches. Since we would like to perform real-time detection, any reduction in computational complexity (without incurring much performance loss) is highly desirable. Sketching also offers  practical benefits. For instance, for large sensor networks, it reduces the burden of data collection and transmission. It may be impossible to collect data from all sensors and transmit them to a central hub in real-time, but this can be done if we only select a small subset of sensors to collect data at each time. Sketching also reduces data storage requirement. For instance, change-point detection using the generalized likelihood ratio statistic, although robust, it is non-recursive, and one has to store historical data. Using sketching, we only need to store the much lower dimensional sketches rather than the original high-dimensional vectors.}

In this paper, we present a new sequential \emph{sketching procedure} based on the generalized likelihood ratio (GLR) statistics.  In particular, suppose we may choose an $M\times N$ matrix $A$ with $M\ll N$ to project the original data: $y_t = Ax_t$, $t = 1, 2, \ldots$. Assume the pre-change vector is zero-mean Gaussian distributed and the post-change vector is Gaussian distributed with an {\it unknown} mean vector $\mu$ while the covariance matrix is unchanged. Here we assume the mean vector is unknown since it typically represents an anomaly. The GLR statistic is formed by replacing the unknown $\mu$ with its maximum likelihood ratio estimator (e.g., \cite{siegmund2011detecting}). Then we further generalize to the setting with time-varying projections $A_t$ of dimension $M_t\times N$. We demonstrate the good performance of our procedures by simulations,  a real-data example of solar flare detection, and a synthetic example of power network failure detection with data generated using real-world power network topology. 

Our theoretical contribution are mainly in two aspects. We obtain analytic expressions for two fundamental performance metrics for the sketching procedures: the average run length (ARL) when there is no change and the expected detection delay (EDD) when there is a change-point, for both fixed and time-varying projections. Our approximations are shown to be highly accurate using simulations. These approximations are quite useful in determining the threshold of the detection procedure to control false-alarms, without having to resort to the onerous numerical simulations. Moreover, we characterize \yang{the relative performance of the sketching procedure compared to that without sketching.} 
We examine the EDD ratio when the sketching matrix $A$ is either a random Gaussian matrix or a sparse 0-1 matrix (in particular, an expander graph). We find that, as also verified numerically, \yang{when the signal strength  and $M$ are sufficiently large, the sketching procedure may have  little performance loss. When the signal is weak, the performance loss can be large when $M$ is too small. In this case, our results can be used to find the minimum $M$ such that performance loss is bounded, assuming certain worst case signal and for a given target ARL value. }

\yang{To the best of our knowledge, our work is the first to consider sequential change-point detection using the generalized likelihood ratio statistic, assuming the {\it post-change mean is unknown to represent an anomaly}. The only other work \cite{compressive_change2014} that consider change-point detection using linear projections assume the post-change mean is known and further, to be sparse. Our results are more general since we do not make such assumptions. Assuming the post-change mean to be unknown provides a more useful procedure since in change-point detection, the post-change set-up is usually unknown. Moreover, \cite{compressive_change2014} considers Shiryaev-Robert's procedure, which is based on a different kind of detection statistic than the generalized likelihood ratio statistic considered here. The theoretical analyses therein consider slightly different performance measures, the Probability of False Alarm (PFA) and Average Detection Delay (ADD) and our analyses are completely different. }

Our notations are standard: $\chi^2_k$ denotes the Chi-square distribution with degree-of-freedom $k$, $I_n$ denotes an identity matrix of size $n$; $X^\dag$ denotes the pseudoinverse of a matrix $X$; $[x]_i$ denotes the $i$th coordinate of a vector $x$; $[X]_{ij}$ denotes the $ij$th element of a matrix $X$; $x\transpose$ denotes the transpose of a vector or matrix $x$.

The rest of the sections are organized as follows. We first review some  related work. Section \ref{sec:formulation} sets up the formulation of the sketching problem for sequential change-point detection. Section \ref{sec:sketching} presents the sketching procedure. Section \ref{sec:analysis} and Section \ref{sec:A} contain the performance analysis of the sketching procedures. Section \ref{sec:numerical} and Section \ref{sec:real} demonstrate good performance of our sketching procedures using simulation and real-world examples. Section \ref{sec:summary} concludes the paper. All proofs are delegated to the appendix.

\subsection{Related work}

Change-point detection problems are closely related to industrial quality control and multivariate statistical control charts (SPC), where an observed process is assumed initially to be in control and at a change-point becomes out of control. The idea of using random projections for change detection has been explored for SPC in the pioneering work \cite{Runger96} based on $U^2$ multivariate control chart, the follow-up work \cite{Bodnar05} for cumulative sum (CUSUM) control chart and the exponential weighting moving average (EWMA) schemes, and in \cite{Skubalska2013,change-point-bookchapt15} based on the Hotelling statistic. These works provide a complementary perspective from SPC design, while our method takes a different approach and is based on sequential hypothesis testing, treating both the change-point location and the post-change mean vector as unknown parameters. By treating the change-point location as an unknown parameter when deriving the detection statistic, the sequential hypothesis testing approach overcomes the drawback of some SPC methods due to a lack-of-memory, such as the Shewhart chart and the Hotelling chart, since they cannot utilize the information embedded in the entire sequence \cite{SQC06}. Moreover, our sequential GLR statistic may be preferred over the CUSUM procedure in the setting when it is difficult to specify the post-change mean vector.
Besides the above distinctions from the SPC methods, other novelty of our methods also include: (1) we developed new theoretical results for the sequential GLR statistic; (2) we consider the sparse 0-1 and time-varying projections (the sparse 0-1 projection corresponds to downsampling the dimensions); (3) we study the amount of dimensionality reduction can be performed (i.e., the minimum $M$) such that there is little performance loss.

This paper extends on our preliminary work reported in \cite{xie2015sketching} with several important extensions. We have added (1) time-varying sketching projections and their theoretical  analysis; (2) extensive numerical examples to verify our theoretical results; (3) new real-data examples of solar flare detection and power failure detection.

Our work is related to compressive signal processing \cite{compressiveSP2010}, where the problem of interest is to estimate or detect (in the fixed-sample setting) a sparse signal using compressive measurements. In \cite{arias2012detecting}, an offline test for a non-zero vector buried in Gaussian noise using linear measurements is studied; interestingly, a conclusion similar to ours is drawn that the task of detection within this setting is much easier than the tasks of estimation and support recovery. Another related work is \cite{GengXuLai2013}, which considers a problem of identifying a subset of data streams within a larger set, where the data streams in the subset follow a distribution (representing anomaly) that is different from the original distribution; the problem considered therein is not a sequential change-point detection problem as the ``change-point''  happens at the onset ($t = 1$). 
In \cite{XuLai2013}, an offline setting is considered and the goal is to identify $k$ out of $n$ samples whose distributions are different from the normal distribution $f_0$. They use a ``temporal'' mixing of the samples over the finite time horizon. This is different from our setting since we project over the signal dimension at each time.  Other related work include kernel methods \cite{harchaoui2009kernel} and \cite{arlot2012kernel} that focus on offline change-point detection. Finally,  detecting transient changes in power systems has been studied in \cite{chen2016quickest}.

Another common approach to dimensionality reduction is principal component analysis (PCA) \cite{mishin2014real}, which achieves dimensionality reduction  by projecting the signal along the singular space of the leading singular values. In this case, $A$ or $A_t$ corresponds to the signal singular space. Our theoretical approximation for ARL and EDD can also be applied in these settings. \yc{It may not be easy to find the signal singular space when the dimensionality is high, since computing singular value decomposition can be expensive \cite{PCA_changepoint}.}

\section{Formulation}
\label{sec:formulation}

Consider a sequence of observations with an open time horizon $x_1, x_2, \ldots, x_t$, $t = 1, 2, \ldots$, where $x_t \in \mathbb{R}^N$ and $N$ is the signal dimension. Initially, the observations are due to noise. There can be a time $\kappa$ such that an unknown change-point occurs and it changes the mean of the signal vector. Such a problem can be formulated as the following hypothesis test:
\begin{equation}
\begin{array}{ll}
\textsf{H}_0: & x_t  \sim \mathcal{N}(0, I_N), \quad t = 1, 2, \ldots\\
\textsf{H}_1: & x_t \sim \mathcal{N}(0, I_N), \quad t = 1, 2, \ldots, \ka, \\
& x_t \sim \mathcal{N}(\mu, I_N), \quad t = \ka + 1, \ka+2, \ldots
\end{array}
\label{hypothese}
\end{equation}
where the {\it unknown} mean vector is defined as
\[\mu \triangleq [\mu_1, \ldots, \mu_N]^\intercal \in \mathbb{R}^N.\]
Without loss of generality, we have assumed the noise variance is 1. %
Our goal is to detect the change-point as soon as possible after it occurs, subjecting to the false alarm constraint. Here, we assume the covariance of the data to be an identity matrix and the change only happens to the mean. 

To reduce data dimensionality, we  linearly project each observation $x_t$ into a lower dimensional space, which we refer to as \emph{sketching}. We aim to develop procedures that can detect a change-point using the low-dimensional \emph{sketches}. In the following, we consider two types of linear sketching: the fixed projection and the time-varying projection.

\vspace{0.05in}

\noindent{\it Fixed projection.} Choose an $M\times N$ projection matrix $A$ with $M\ll N$. We obtain low dimensional sketches via:
\begin{equation}
y_t\triangleq Ax_t, \quad t = 1, 2, \ldots
\label{sketching}
\end{equation}
Then the hypothesis test for the original problem (\ref{hypothese}), becomes the following hypothesis test based on the sketches (\ref{sketching})
\begin{equation}
\begin{array}{ll}
\textsf{H}_0: & y_t  \sim \mathcal{N}(0, AA\transpose), \quad t = 1, 2, \ldots\\
\textsf{H}_1: & y_t \sim \mathcal{N}(0, AA\transpose), \quad t = 1, 2, \ldots, \ka, \\
& y_t \sim \mathcal{N}(A\mu, AA\transpose), \quad t = \ka + 1, \ka+2, \ldots
\end{array}\label{new_hypothesis}
\end{equation}
Note that both mean and covariance structures are affected by the projections.

\vspace{0.05in}

\noindent{\it Time-varying projection.} In certain applications, one may use different sketching matrices at each time. The projections are denoted by $A_t \in \mathbb{R}^{M_t\times N}$ and  the number of sketches $M_t$ can change as well. The hypothesis test for sketches becomes:
\begin{equation}
\begin{array}{ll}
\textsf{H}_0: & y_t  \sim \mathcal{N}(0, A_tA_t\transpose), \quad t = 1, 2, \ldots\\
\textsf{H}_1: & y_t \sim \mathcal{N}(0, A_tA_t\transpose), \quad t = 1, 2, \ldots, \ka, \\
& y_t \sim \mathcal{N}(A_t\mu, A_tA_t\transpose), \quad t = \ka + 1, \ka+2, \ldots
\end{array}
\end{equation}
The above models also capture several cases:\\
\noindent (i) (Pairwise comparison.)
In applications such as social network data analysis and computer vision, we are interested in a pairwise comparison of variables \cite{chen2015information,massiminoone}. \yang{This can be modeled as observing the difference between a pair of variables, i.e., at each time $t$, the measurements are $[x_t]_i -[x_t]_j$, for a set of $i\neq j$. There are a total of $N^2$ possible comparisons, and we may select $M$ out of $N^2$ such comparisons to observe.} The pairwise comparison model leads to a structured fixed projection $A$ with only $\{0, 1, -1\}$ entries, for instance:
\[
A =
\begin{bmatrix}
    1       & -1 & 0 & \dots & 0 & 0 \\
    1       & 0 & 0 & \dots & -1 & 0\\
    \hdotsfor{6} \\
    0       & 1 & 0 & \dots & 0& -1
\end{bmatrix} \in \mathbb{R}^{M\times N}.
\]

\noindent (ii) (Missing data.) In various applications we may only observe a subset of entries at each time (e.g., due to sensor failure), and the locations of the observed entries also vary with time \cite{balzano2015local}. This corresponds to $A_t \in \mathbb{R}^{M_t \times N}$ being a submatrix of an identity matrix by selecting rows from an index set $\Omega_t$ at time $t$.

\noindent (iii) (PCA.) There are also approaches to change-point detection using principal component analysis (PCA) of the data streams (e.g., \cite{mishin2014real, Jackson91}), which can be viewed as using a fixed projection $A$ being the signal singular space associated with the leading singular values.

\section{Sketching procedure}
\label{sec:sketching}

\subsection{Fixed projection}

\subsubsection{Derivation of GLR statistic}
We now derive the likelihood ratio statistic for the hypothesis test in (\ref{new_hypothesis}). 
Define the sample mean within a window $[k, t]$
\ben
\bar{y}_{k, t} =\frac{\sum_{i=k+1}^t y_i}{t-k}.
\label{ykt}
\een
Since the observations are i.i.d. over time, for an assumed change-point $\ka = k$, \yc{for the hypothesis test} in (\ref{new_hypothesis}), the log-likelihood ratio (log-LR) of observations accumulated up to time $t > k$ can be shown to be 
\yang{
\begin{equation}
\begin{split}
&\ell(t, k, \mu) \\
= & ~\log \frac{\prod_{i=1}^k f_0(y_i) \cdot \prod_{i=k+1}^t f_1(y_i)}{\prod_{i=1}^t f_0(y_i)} \\
= &~  \sum_{i=k+1}^t \log \frac{f_1(y_i)}{f_0(y_i)} \\
= &~(t-k)[\bar{y}_{k, t} \transpose(AA\transpose)^{-1}A\mu - \frac{1}{2} \mu\transpose A\transpose(AA\transpose)^{-1}A\mu],
\label{stats}
\end{split}
\end{equation}}
\yang{where $f_0(y_i)$ denotes the probability density function of $y_i$ under the null, i.e., $f_0 =  \mathcal{N}(0, AA\transpose)$ and $f_1(y_i)$ denotes the probability density function of $y_i$ under the alternative, i.e., $f_1 = \mathcal{N}(A\mu, AA\transpose)$. }

Since $\mu$ is unknown, we replace it with a maximum likelihood estimator for  fixed values of $k$ and $t$ in the likelihood ratio (\ref{stats}) to obtain the log-generalized likelihood ratio (log-GLR) statistic. Taking the derivative of $\ell(t, k, \mu)$ in (\ref{stats}) with respect to $\mu$ and setting it to zero, we obtain an equation that the maximum likelihood estimator $\mu^*$ of the post-change mean vector needs to satisfy: 
\begin{equation}
 A\transpose(AA\transpose)^{-1}A\mu^*=A\transpose(AA\transpose)^{-1}\bar{y}_{t, k},\label{normal_eqn}
\end{equation}
or equivalently
\begin{equation*}
 A\transpose[(AA\transpose)^{-1}A\mu^*-(AA\transpose)^{-1}\bar{y}_{t, k}]=0.
\end{equation*}
Note that since $A^\intercal$ is of dimension $M$-by-$N$, this defines an under-determined system of equations for the maximum likelihood estimator $\mu^*$.   
In other words, any $\mu^*$ that satisfies 
\[(AA\transpose)^{-1}A\mu^* = (AA\transpose)^{-1}\bar{y}_{t, k} + c,\]
for a vector $c\in \mathbb{R}^N$ that lies in the null space of $A$, $A\transpose c = 0$, is a maximum likelihood estimator for the post-change mean. In particular, we may choose the zero vector $c = 0$, then the corresponding maximum estimator  satisfies the equation below: 
\begin{equation}
(AA\transpose)^{-1}A\mu^* = (AA\transpose)^{-1}\bar{y}_{t, k}. \label{mu_star}
\end{equation}
Substituting such a $\mu^*$ into (\ref{stats}), we form the log-GLR. Using (\ref{mu_star}), the first and second terms in (\ref{stats}) become, respectively,
\begin{align*}
\bar{y}_{k, t} \transpose(AA\transpose)^{-1}A\mu^*
&= \bar{y}_{k, t} \transpose  (AA\transpose)^{-1}\bar{y}_{t, k}, \\
\frac{1}{2} {\mu^*}\transpose A\transpose(AA\transpose)^{-1}A\mu^* &=
 \frac{1}{2} \bar{y}_{k,t} \transpose(AA\transpose)^{-1} \bar{y}_{t,k}.
 \end{align*}
Combining above, from (\ref{stats}) we have that the log-GLR statistic is given by
\begin{equation}
\begin{split}
\ell(t, k, \mu^*) = \frac{t-k}{2}\bar{y}_{k,t} \transpose (AA\transpose)^{-1}\bar{y}_{k,t}.
\end{split}
\label{12}
\end{equation}
Since the change-point location $k$ is unknown, when forming the detection statistic, we take the maximum over a set of possible locations, i.e., the most recent samples from $t-w$ to $t$, where $w>0$ is the window size. 
Then we  define the {\it sketching procedure}, \yang{which is a stopping time} that stops  whenever the log-GLR statistic raises above a threshold $b > 0$:
\ben
T = \inf\{t:
\max_{t-w\leq k < t}\frac{t-k}{2}\bar{y}_{k,t} \transpose (AA\transpose)^{-1}\bar{y}_{k,t} > b\}. \label{proc}
\een
Here the role of the window is two-fold: it reduces the data storage when implement the detection procedure, and it establishes a minimum level of change that we want to detect.

\subsubsection{Equivalent formulation of fixed projection sketching procedure}
We can further simplify the log-GLR statistic in (\ref{12}) using the singular value decomposition (SVD) of $A$. This will facilitates the performance analysis in Section \ref{sec:analysis} and leads into some insights about the structure of the log-GLR statistic. Let the SVD of $A$ be given by
\begin{equation}
A = U\Sigma V\transpose, \label{A_SVD}
\end{equation}
where $U\in \mathbb{R}^{M\times M}$, $V\in \mathbb{R}^{N\times M}$ are the left and right singular spaces, $\Sigma \in \mathbb{R}^{M\times M}$ is a diagonal matrix containing all non-zero singular values. Then $(AA^\intercal)^{-1} =  U\Sigma^{-2} U^\intercal$. Thus, we can write the log-GLR statistic (\ref{12}) as
\ben
\ell(t, k, \mu^*) = \frac{t-k}{2}\bar{y}_{k,t}\transpose U\Sigma^{-2}U\transpose \bar{y}_{k,t}.
\label{final3}\een
Substitution of the sample average (\ref{ykt}) into (\ref{final3}) results in 
\[\ell(t, k, \mu^*) =  \frac{\left\|\Sigma^{-1}U\transpose \left(\sum_{i=k+1}^t y_i\right)\right\|^2}{2(t-k)}.\] Now define transformed data 
\[z_i \triangleq \Sigma^{-1} U\transpose y_i.\] Since under the null hypothesis $y_i \sim \mathcal{N}(0, AA\transpose)$, we have $z_i \sim \mathcal{N}(0, I_M)$. Similarly, under the alternative hypothesis $y_i \sim \mathcal{N}(A\mu, AA\transpose)$, we have $z_i \sim \mathcal{N}(V\transpose \mu, I_M)$. 
Combing above, we obtain the following equivalent form for the sketching procedure in (\ref{proc}):
\ben
T' = \inf\{t:
\max_{t-w\leq k < t} \frac{\left\|\sum_{i=k+1}^t z_i\right\|^2}{2(t-k)}> b\}. \label{proc2}
\een
This form of the detection procedure has one intuitive explanation: the sketching procedure essentially projects the data to form $M$ (less than $N$) {\it independent} data streams, and then form a log-GLR statistic for these independent data streams.

\subsection{Time-varying projection}
\subsubsection{GLR statistic}

Similarly, we can derive the \yc{log-LR} statistic for the time-varying projections. For an assumed change-point $\ka = k$, using all observations from $k+1$ to time $t$, we find the log likelihood ratio statistic similar to (\ref{stats}):
\begin{equation}
\begin{split}
&\ell(t, k, \mu) \\
=& \sum_{i=k+1}^t [y_i\transpose(A_iA_i\transpose)^{-1}A_i\mu - \frac{1}{2} \mu\transpose A_i\transpose(A_iA_i\transpose)^{-1}A_i\mu].
\end{split}
\label{stats2}
\end{equation}
Similarly, we replace the unknown post-change mean vector $\mu$ by its maximum likelihood estimator using data in $[k+1, t]$. 
Taking the derivative of $\ell(t, k, \mu)$ in (\ref{stats2}) with respect to $\mu$ and setting it to zero, we obtain an equation that the maximum likelihood estimator $\mu^*$ needs to satisfy
\begin{equation}
\left[\sum_{i=k+1}^t A_i\transpose(A_iA_i\transpose)^{-1}A_i\right]\mu^*=\sum_{i=k+1}^t A_i\transpose(A_iA_i\transpose)^{-1}y_i.\label{normal_eqn2}
\end{equation}

To solve $\mu^*$ from the above, one needs to discuss the rank of the matrix $\sum_{i=k+1}^t A_i\transpose(A_iA_i\transpose)^{-1}A_i$  on the left-hand-side of (\ref{normal_eqn2}). Define the SVD of $A_i = U_i D_i V_i^{\transpose}$ with $U_i \in \mathbb{R}^{M_i \times \M_i}$ and $V_i \in \mathbb{R}^{N \times M_i}$ being the eigenspaces, and $D_i \in \mathbb{R}^{M_i \times \M_i}$ being a diagonal matrix that contains all the singular values. We have that
\begin{equation}
\sum_{i=k+1}^t A_i\transpose(A_iA_i\transpose)^{-1}A_i = \sum_{i=k+1}^t V_i V_i^{\transpose} = QQ^{\transpose},
\label{V_def}
\end{equation}
where $Q=[V_{k+1}, \ldots, V_t] \in \mathbb{R}^{N \times S}$ and $S = \sum_{i=k+1}^t M_i$. Consider the rank of $\sum_{i=k+1}^t A_i\transpose(A_iA_i\transpose)^{-1}A_i$ for two cases below:
\begin{enumerate}
\item[(i)]
 $A_i'$s are independent Gaussian random matrices.  The columns within each $V_i$ in (\ref{V_def}) are linearly independent with probability one. Moreover, the columns in different $V_i$ blocks are independent since $V_i'$s are independent, and their entries are drawn as independent continuous random variables. Therefore, the columns of $Q$ are linearly independent and rank($QQ^{\transpose}$) = $\min\{N, (t-k)M\}$ with probability one. Hence, we have that if $t-k<N/M$, $QQ^{\transpose}$ is rank deficient with probability 1; if $t-k\geq N/M$, $QQ^{\transpose}$ is full rank with probability one; 
 
\item[(ii)] $A_i'$s are independent random matrices with entries drawn from a discrete distribution. In this case, we can claim that if $t-k<N/M$, $QQ^{\transpose}$ is not full rank and if $t-k\geq N/M$, $QQ^{\transpose}$ is full rank with high probability, however, with probability less than one.
\end{enumerate}

From above we can see that this matrix is rank deficient when $t-k$ is small. However, this is generally the case since we want to detect quickly. 
Since the matrix in (\ref{V_def}) is non-invertible in general, we use the pseudo-inverse of the matrix. Define
\[B_{k,t} \triangleq \left(\sum_{i=k+1}^t A_i\transpose(A_iA_i\transpose)^{-1}A_i\right)^{\dag} \in \mathbb{R}^{N\times N}.\] From (\ref{normal_eqn2}), we obtain an estimate of the maximum likelihood estimator for the post-change mean
\[
\mu^* = B_{k,t}\sum_{i=k+1}^t A_i\transpose(A_iA_i\transpose)^{-1}y_i.
\]
Substituting such a $\mu^*$ into (\ref{stats2}), we obtain the \yc{log-GLR statistic} for time-varying projection:
\begin{equation}
\begin{split}
&\ell(t, k, \mu^*) \\
=&\left(\sum_{i=k+1}^t A_i\transpose(A_iA_i\transpose)^{-1}y_i\right)\transpose B_{k,t} \left(\sum_{i=k+1}^t A_i\transpose(A_iA_i\transpose)^{-1}y_i\right).
\label{GLR2}
\end{split}
\end{equation}
%

\subsubsection{Time-varying 0-1 project matrices}
To further simplify the expression of GLR in (\ref{GLR2}), we focus on a special case when $A_t$ has $0$-$1$ entries only. This means that at each time we only observe a subset of the entries. 

In this case, $A_t A_t \transpose$ is an $M_t$-by-$M_t$ identity matrix, and $A_t \transpose A_t$ is a diagonal matrix. For a diagonal matrix $D\in \mathbb{R}^{N\times N}$ with diagonal entries $\lambda_1,\ldots, \lambda_N$, the pseudo-inverse of $D$ is a diagonal matrix with diagonal entries $\lambda_i^{-1}$ if $\lambda_i \neq 0$ and with diagonal entries $0$ if $\lambda_i =0$. Let the index set of the observed entries at time $t$ be $\Omega_t$. 
Define indicator variables 
\begin{equation}
\mathbb{I}_{tn} = \left\{
\begin{array}{ll}
1, & \hbox{if } n \in \Omega_t; \\
0, & \hbox{otherwise.}
\end{array}\right.
\end{equation}
Then the log-GLR statistic in (\ref{GLR2}) becomes
\begin{equation}
\ell(t, k, \mu^*) = \sum_{n=1}^N \frac{\left(\sum_{i=k+1}^t [x_i]_n \mathbb I_{in}\right)^2}{\sum_{i=k+1}^t  \mathbb I_{in}},
\end{equation}
Hence, for 0-1  matrices, the sketching procedure based on log-GLR statistic is given by 
\begin{equation}
\begin{split}
T_{\rm \{0,1\}}= &\\
 \inf\{t: &
\max_{t-w\leq k < t} \frac{1}{2}\sum_{n=1}^N \frac{\left(\sum_{i=k+1}^t [x_i]_n \mathbb I_{in}\right)^2}{\sum_{i=k+1}^t  \mathbb I_{in}} > b\},
\end{split}
\label{proc3}
\end{equation}
where $b>0$ is the prescribed threshold and $w$ is the window length. Note that the log-GLR statistic essentially computes the sum of each entry within the time window $[t-w,t)$, and then averages the squared-sum.

\section{Performance analysis}
\label{sec:analysis}

In this section, we present theoretical approximations to two performance metrics, the average-run-length (ARL), which captures the false-alarm-rate, and the expected detection delay (EDD), which captures the power of the detection statistic. 

\subsection{Performance metrics}

We first introduce some necessary notations. Under the null hypothesis in (\ref{hypothese}), the observations are zero mean. Denote the probability and expectation in this case by $\mathbb{P}^{\infty}$ and $\mathbb{E}^{\infty}$, respectively. Under the alternative hypothesis, there exists a change-point $\kappa$, $0\leq \kappa < \infty$ such that the observations have mean $\mu$ for all $t>\kappa$. Probability and expectation in this case are denoted by $\mathbb{P}^{\kappa}$ and $\mathbb{E}^{\kappa}$, respectively.

The choice of the threshold $b$ involves a trade-off between two standard performance metrics that are commonly used for analyzing change-point detection procedures \cite{XieSiegmund2012}: (i) the ARL, defined to be the expected value of the stopping time when there is
no change;
(ii) the EDD, defined to be
the expected stopping time in the extreme case where
a change occurs immediately at $\ka = 0$, which is denoted as $\mathbb{E}^0\{T\}$. 

\yc{The following argument from \cite{SiegmundVenkat1996} explains why we consider $\mathbb{E}^0\{T\}$. When there is a change at $\kappa$, we are interested in the expected delay until its detection, i.e., the conditional expectation $\mathbb{E}^{\kappa}\{T-\kappa|T > \kappa\}$, which is a function of $\kappa$. When the shift in the mean only occurs in the positive direction $[\mu]_i \geq 0$, it can be shown that $\sup_\kappa \mathbb{E}^{\kappa}\{T-\kappa|T > \kappa\} = \mathbb{E}^0\{T\}$. It is not obvious that this remains true when $[\mu]_i$ can be either positive or negative. However, since $\mathbb{E}^0\{T\}$ is certainly of interest and reasonably easy to analyze, it is common to consider $\mathbb{E}^0\{T\}$ in the literature and we also adopt this as a surrogate.}

\subsection{Fixed projection}

Define a special function (cf. \cite{Siegmund1985}, page 82) \[\nu(u) = 2u^{-2} \exp[-2\sum_{i=1}^\infty i^{-1} \Phi(-|u|i^{1/2}/2)],\] where $\Phi$ denotes the cumulative probability function (CDF) for the standard Gaussian with zero mean and unit variance. For numerical purposes, a simple and accurate approximation is given by (cf. \cite{SiegmundYakir2007})
\[
\nu(u) \approx \frac{2/u[\Phi(u/2) - 0.5]}{(u/2)\Phi(u/2) + \phi(u/2)},
\]
where $\phi$ denotes the probability distribution function (PDF) for standard Gaussian. We obtain an approximation to the ARL of the sketching procedure with a fixed projection as follows:
\begin{thm}[ARL, fixed projection]
\label{thmARL}
Assume that $1\leq M\leq N$, $b \rightarrow \infty$ with $M \rightarrow \infty$ and $b/M$ fixed.
Then for $w = o(b^r)$ for some
positive integer $r$, we have that the ARL of the sketching procedure defined in (\ref{proc}) is given by
\begin{equation}
\begin{split}
&\mathbb{E}^\infty\{T\} = \\
&\frac{2\sqrt{\pi}}{c(M, b, w) }  \frac{1}{1-\frac{M}{2b}} \frac{1}{\sqrt{M}}
\left(\frac{M}{2b}\right)^{\frac{M}{2}} e^{b-\frac{M}{2}}(1 + o(1)), \label{ET}
\end{split}
\end{equation}
where
\begin{equation}
c(M, b, w)  = \int_{\sqrt{\frac{2b}{w}}(1-\frac{M}{2b})}^{\sqrt{2b}(1-\frac{M}{2b})} u \nu^2(u) du.\label{c_def}
\end{equation}
\end{thm}
\yang{This theorem gives an explicit expression for ARL as a function of the threshold $b$, the dimension of the sketches $M$, and the window length $w$.  As we will show below, the approximation to ARL given by Theorem \ref{thmARL} is highly accurate. On a higher level, this theorem characterizes the mean of the stopping time, when the detection statistic is driven by noise. The requirement for $w= o(b^r)$ for some positive integer $r$ comes from \cite{XieSiegmund2012} that our results are based on; this ensures the correct scaling when we pass to the limit. This essentially requires that the window length be large enough when the threshold $b$ increases. In practice, $w$ has to be large enough so that it does not cause a miss detection: $w$ has to be longer than the anticipated expected detection delay as explained in \cite{XieSiegmund2012}.}

Moreover, we obtain an approximation to the EDD of the sketching procedure with a fixed projection as follows. Define
\begin{equation}
\Delta = \|V\transpose \mu\|.  \label{delta_matrix}
\end{equation}
Let $\tilde{S}_t\triangleq \sum_{i=1}^t\delta_i$ be a random walk where the increments $\delta_i$ are independent and identically distributed with mean $\Delta^2/2$ and variance $\Delta^2$. Since the random walk has a positive drift, its minimum value is lower bounded by zero and we can find the expected value of the minimum is given by \cite{SiegmundYakir2007}
 \[\mathbb{E}\{\min_{t\geq 0} \tilde{S}_t\} = -\sum_{i=1}^\infty i^{-1}\mathbb{E}\{\tilde{S}_i^-\},\] where $(x)^- = - \min\{x, 0\}$, and the infinite series  converges and can be evaluated easily numerically. 
 Define a stopping time \[\tau = \min\{t: \tilde{S}_t > 0\}.\] Let $\rho(\Delta) = \frac{1}{2}\mathbb E\{\tilde S_\tau^2\}/\mathbb E \{\tilde{S}_\tau\}$. It can be shown that \cite{XieSiegmund2012} 
\[\rho(\Delta) = \Delta^2/4 + 1 - \sum_{i=1}^\infty i^{-1}\mathbb{E}\{\tilde{S}_i^-\}.
\]
%
\begin{thm}[EDD, fixed projection]
\label{thmEDD}
Suppose $b \rightarrow \infty$ with other parameters held fixed.
Then for a given matrix $A$ with the right singular vectors $V$, the EDD of the sketching procedure (\ref{proc}) when $\ka = 0$ is given by
\begin{equation}
\mathbb{E}^0\{T\} =
\frac{b+\rho(\Delta) - M/2 - \mathbb{E}\{\min_{t\geq 0} \tilde{S}_t\} + o(1)}{\Delta^2/2},
\label{DD}
\end{equation}
\end{thm}

\yang{The theorem finds an explicit expression for the EDD as a function of threshold $b$, the number of sketches $M$, and the signal strength captured through $\Delta$ which depends on the post mean vector $\mu$ and the projection subspace $V$. }

The proofs for the above two theorems utilize the equivalent form of $T$ in (\ref{proc2}) and draw a connection of the sketching procedure to the so-called mixture procedure (cf. $T_2$ in \cite{XieSiegmund2012}) when  $M$ sensors are affected by the change, and the post-change mean vector is given by $V\transpose \mu$.

\subsubsection{Accuracy of theoretical approximations}

Consider a $A$ generated as a Gaussian random matrix, with entries i.i.d. $\mathcal{N}(0, 1/N)$. Using the expression in Theorem \ref{thmARL}, we can find the threshold $b$ such that the corresponding ARL is equal to 5000. \yc{This can be done conveniently, since the ARL is an increasing function of the threshold $b$, we can use bisection to find such a threshold $b$.} Then we compare it with a threshold $b$ found from the simulation. 

As shown in Table \ref{table_comp}, the threshold found using Theorem \ref{thmARL} is very close to that obtained from simulation. \yc{Therefore, even if the theoretical ARL approximation is derived for $N$ tends to infinity, it is still applicable when $N$ is large but finite}. Theorem \ref{thmARL} is quite useful in determining a threshold for a targeted ARL, as simulations for large $N$ and $M$ can be quite time-consuming, especially for a large ARL (e.g., 5000 or 10,000).

Moreover, we simulate the EDD for detecting a signal such that the post-change mean vector $\mu$ has all entries equal to a constant $[\mu]_i = 0.5$.
As also shown in Table \ref{table_comp}, the approximation for EDD using Theorem \ref{thmEDD} is quite accurate.

We have also verified that the theoretical approximations are accurate for the expander graphs  and details omitted here since they are similar.
\begin{table}[h!]
\caption{$A$ being a fixed Gaussian random matrix. $N$ = 100, $w = 200$, ARL = 5000, for simulated EDD $[\mu]_i = 0.5$.
\yc{Numbers in the parentheses are the standard deviation of the simulated EDD.} }
\begin{center}
\begin{tabular}{c||c|c||c|c}
\hline
$M$ & $b$ (theo)  & $b$ (simu) & EDD (theo) & EDD (simu) \\\hline
100 & 84.65 & 84.44 & 3.4 & 4.3 (0.9) \\
70 & 64.85 & 64.52 & 4.0 & 5.1 (1.2)\\
50 & 51.04 & 50.75 & 4.8 & 5.9 (1.6) \\
30 & 36.36 & 36.43 & 7.7 & 7.6 (2.5)\\
10 & 19.59 & 19.63 & 19.8 & 17.4 (9.8)\\\hline
\end{tabular}
\label{table_comp}
\end{center}
\end{table}

\subsubsection{Consequence}
Theorem \ref{thmARL} and Theorem \ref{thmEDD} have the following consequence:
\begin{remark}\label{remark_b_vs_M}
For a fixed large ARL, when $M$ increases, \yang{the ratio $M/b$ is bounded between $0.5$ and $2$}. This is a property quite useful for establishing results in Section \ref{sec:A}. This is demonstrated numerically in Fig. \ref{fig:b_vs_M} when $N = 100$, $w = 200$, for a fixed ARL being 5000. The corresponding threshold $b$ is found using Theorem \ref{thmARL}, when $M$ increases from 10 to 100. 
More precisely, Theorem \ref{thmARL} leads to the following corollary:
\begin{corollary}
Assume a large constant $\gamma \in (e^5, e^{20})$. Let $w\geq 100$. For any large enough $M>24.85$, the threshold $b$ such that the corresponding ARL $\mathbb{E}^{\infty}\{T\} =\gamma$ satisfies $M/b \in (0.5,2)$. In other words, $\max\{M/b, b/M\} \leq 2$. 
\label{M_over_b_range}
\end{corollary}
Note that $e^{20}$ is on the order of $5\times10^{8}$, hence, it means that ARL can be very large, however, it is still bounded above (this means that the corollary holds for an non-asymptotic regime). 

\begin{figure}
\begin{center}
\includegraphics[width = 0.7\linewidth]{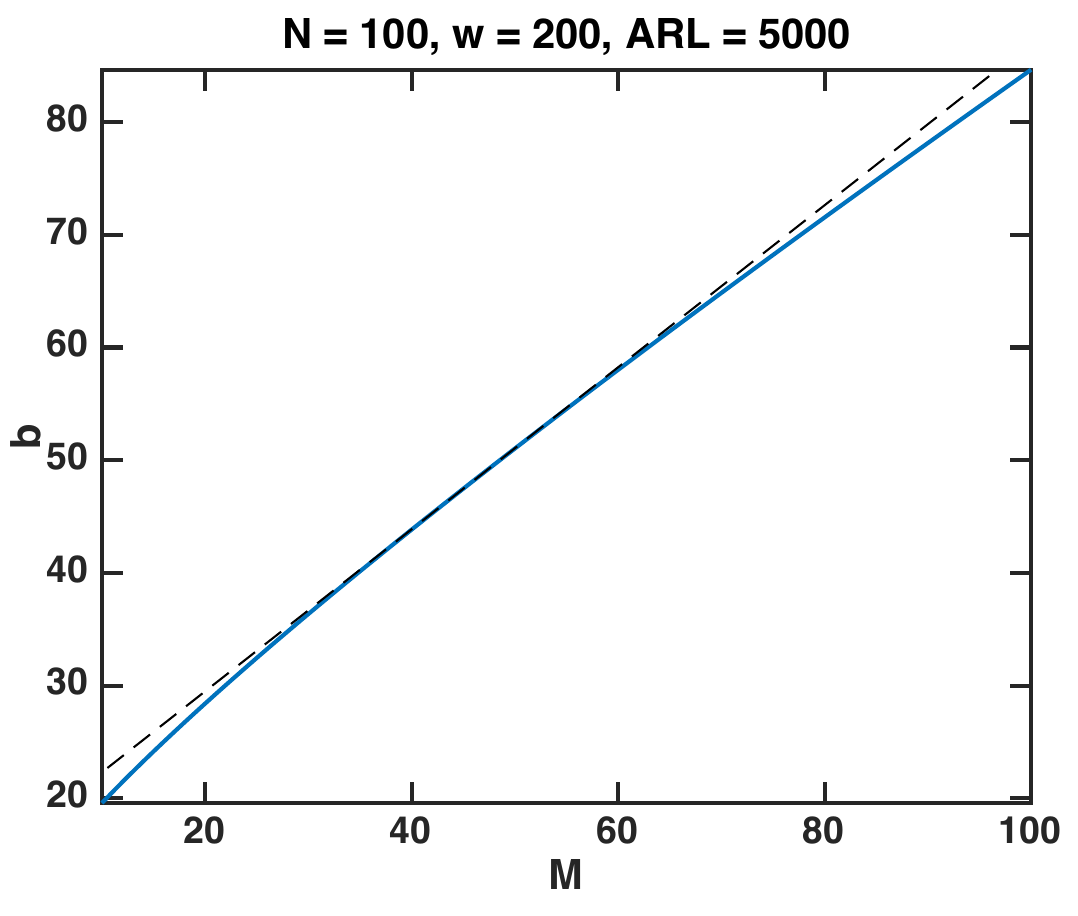}
\end{center}
\caption{For a fixed ARL being 5000, the threshold $b$ versus $M$ obtained using Theorem \ref{thmARL}, when $N = 100$ and $w = 200$. The dashed line corresponds to a tangent line of the curve at one point.}
\label{fig:b_vs_M}
\end{figure}

\begin{remark}
As $b\rightarrow\infty$, the first order approximation to the EDD  in Theorem \ref{thmEDD} is given by $b/(\Delta^2/2)$, i.e., the threshold $b$ divided by the Kullback-Leibler (K-L) divergence (see, e.g., \cite{XieSiegmund2012} shows that $\Delta^2/2$ is the K-L divergence between $\mathcal{N}(0, I_M)$ and $\mathcal{N}(V\transpose \mu, I_M)$). This is consistent with our intuition since the expected increment of the detection statistics is roughly the K-L divergence of the test. \yang{For finite $b$, especially when the signal strength is weak and when the number of sketches $M$ is not large enough, the other terms than $b/(\Delta^2/2)$ will play a significant role in determining the EDD.}
\end{remark}

\subsection{Time-varying 0-1 random projection matrices}

Below, we obtain approximations to ARL and EDD for $T_{\{0,1\}}$, i.e., when 0-1 sketching matrices are used. We assume a fixed dimension $M_t=M, \forall t >0$. We also assume that at each time $t$, we randomly select $M$ out of $N$ dimensions to observe. Hence, at each time, each signal dimension has a probability 
\[r=M/N \in (0, 1)\] to be observed. The sampling scheme is illustrated in Fig. \ref{fig:binning}, when $N=10$ and $M = 3$ (the number of the dots in each column is $3$) over 17 consecutive time periods from time $k=t-17$ to time $t$. 

For such a sampling scheme, we have the following result:
\begin{thm}[ARL, time-varying 0-1 random projection]\label{thm_time_varying}
Let $r = M/N$.  Let $b'=b/r$. When $b\rightarrow \infty$, for the procedure defined in (\ref{proc3}), we have that
\begin{equation}
\begin{split}
&\mathbb{E}^\infty\{T_{\{0, 1\}}\} \\
&= \frac{2\sqrt{\pi}}{c(N, b', w) } \frac{1}{\sqrt{N}} \frac{1}{1-\frac{N}{2b'}}
\left(\frac{N}{2}\right)^{\frac{N}{2}} b'^{-\frac{N}{2}} e^{b'-\frac{N}{2}}+o(1),
\end{split} \label{ET2}
\end{equation}
where $c(N, b', w)$ is defined by replacing $b$ with $b'$ in (\ref{c_def}).
\end{thm}
\begin{figure}
\begin{center}
\includegraphics[width = .7\linewidth]{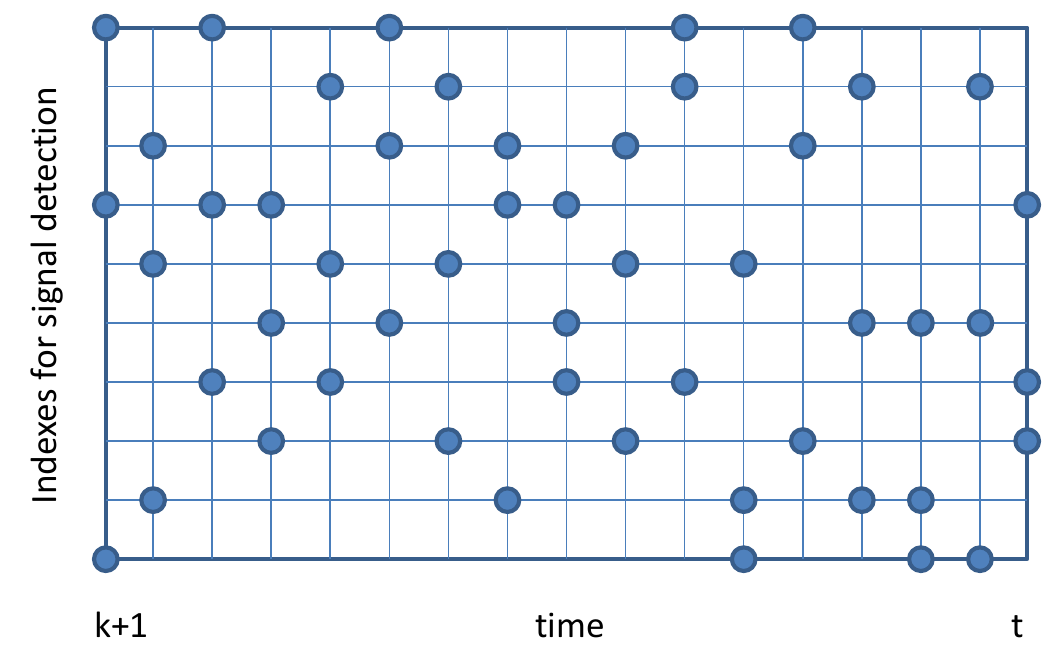}
\end{center}
\caption{A sampling pattern when $A_t$ is a $0$-$1$ matrix, $M = 3$, and $N = 10$. Dot represent entries being observed.}
\label{fig:binning}
\end{figure}

Moreover, we can obtain an approximation to EDD of $T_{\{0,1\}}$, as justified by the following arguments. First,  relax the deterministic constraint that at each time we observe exactly $M$ out of $N$ entries. Instead, assume a random sampling scheme that at each time we observe one entry of $x_i$ with probability $r$, $1\leq n\leq N$. 
Consider i.i.d. Bernoulli random variables $\xi_{ni}$ with parameter $r$ for $1\leq n\leq N$ and $i\geq 1$. Define
\[
Z_{n,k,t} \triangleq \frac{\sum_{i=k+1}^t [x_i]_n\xi_{ni}}{\sqrt{(t-k)r}}.
\]
Based on this, we define a procedure whose behavior is arguably similar to $T_{\{0,1\}}$:
\[
T'_{\{0,1\}} = \inf \{t\geq 1: \max_{t-w\leq k < t} \frac{1}{2} \sum_{n=1}^N Z_{n,k,t}^2 > b \},
\]
where $b>0$ is the prescribed threshold. \yang{Then, using the arguments in Appendix \ref{justification}, we can show that the approximation to EDD of this procedure is given by}
\begin{equation}
\mathbb{E}^0\{ T_{\{0,1\}}' \} = \left(\frac{2b-N}{\sum_{n=1}^N \mu_n^2} + o(1)\right)\cdot \frac{N}{M},
\label{DD2}
\end{equation}
and we use this to approximate the EDD of $T_{\{0,1\}}$.

Table \ref{table_comp2} shows the accuracy of the approximations for ARL in (\ref{ET2}) and for EDD in (\ref{DD2}) with various $M'$s when $N=100$, $w=200$, and all entries of $[\mu]_i =  0.5$. The results show that the thresholds $b$ obtained using the theoretical approximations, and that the EDD approximations are both  very accurate. 


\begin{table}[h!]
\caption{$A_t'$s being time-varying. $N$ = 100, $w = 200$, ARL = 5000, for simulated EDDs all $[\mu]_i = 0.5$.
\yc{Numbers in the parentheses are standard deviation of the simulated results.}}
\begin{center}
\begin{tabular}{c||c|c||c|c}
\hline
$M$ & $b$ (theo)  & $b$ (simu) & EDD (theo) & EDD (simu) \\\hline
100 & 84.65 & 84.44 & 2.8 & 3.3 (0.8) \\
70 & 83.72 & 83.41 & 3.8 & 4.5 (1.2)\\
50 & 82.84 & 83.02 & 5.3 & 6.1 (1.5) \\
30 & 81.46 & 82.48 & 8.7 & 9.8 (2.4)\\
10 & 78.32 & 79.27 & 23.4 & 26.6 (6.4) \\\hline
\end{tabular}
\label{table_comp2}
\end{center}
\end{table}



\end{remark}

\section{Bounding relative performance}\label{sec:A}

In this section, we characterize the relative performance of the sketching procedure compared to that without sketching (i.e., using the original log-GLR statistic). We show that the performance loss due sketching can be small, \yang{when the signal-to-noise ratio and $M$ are both sufficiently large.} In the following, we focus on fixed projection to illustrate this point. 

\subsection{Relative performance metric}\label{rel}

We consider a relative performance measure, which is the ratio of EDD using the original data (denoted as EDD($N$), which corresponds to $A = I$), versus the EDD using the sketches (denoted as EDD($M$) \[\frac{\mbox{EDD}(N)}{\mbox{EDD}(M)} \in (0, 1).\] We will show that this ratio depends critically on the following quantity 
\begin{equation}\label{gamma_def}
\Gamma\triangleq \frac{\|V\transpose \mu\|^2}{\|\mu\|^2},
\end{equation}
which is the ratio of the KL divergence after and before the sketching. 

We start by deriving the relative performance measure using  theoretical approximations we obtained in the last section. Recall the expression for ARL approximation in (\ref{DD}). \yang{Define  
\begin{equation}
h(\Delta, M) = \rho(\Delta) - M/2 - \mathbb{E}\{\tilde{S}_i^-\}. \label{h_def}
\end{equation}} From Theorem \ref{thmEDD}, we obtain that the EDD of the sketching procedure  is proportional to \yang{
\[
\frac{2b}{\|V\transpose \mu\|^2}\cdot \left(1+\frac{h(\|V\transpose \mu\|, M)}{2b}\right) \cdot (1 + o(1)).
\]}
\yang{Let $b_N$ and $b_M$ be the thresholds such that the corresponding ARLs are $5000$, for the  procedure without sketching and with $M$ sketches, respectively.} Define $Q_M = M/b_M$, $Q_N = N/b_N$ and \yang{
\begin{equation}
P = \frac{1+h(\|\mu\|, N)/b_N}{1+ h(\|V\transpose \mu\|, M)/b_M }.
\label{P}
\end{equation}}
Using the definitions above, we have
\begin{equation}
\begin{split}
\frac{\mbox{EDD}(N)}{\mbox{EDD}(M)} &= \yang{P \cdot}\yang{\frac{b_N}{b_M}} \cdot
\frac{\|V\transpose\mu\|^2}{\|\mu\|^2} (1+o(1)) \\
&=  \yang{P \cdot} \frac{N}{M} \cdot \frac{Q_M}{Q_N} \cdot \Gamma (1+o(1)).
\end{split}
\label{ratio1}
\end{equation}

\subsection{Discussion of factors in (\ref{ratio1})}


\yang{We can show that $P\geq 1$ for sufficiently large $M$ and large signal strength. This can be verified numerically. Since all quantities that $P$ depends on can be computed explicitly: the thresholds $b_N$ and $b_M$ can be found from Theorem \ref{thmARL} once we set a target ARL, the  $h$ function can be evaluated using (\ref{h_def}) which  depends explicitly on $\Delta$ and $M$. 
Fig. \ref{fig:P} shows the value of $P$ when $N=100$ and all the entries of the post-change mean vector $\mu_i$ are equal to a constant value that varies across the $x$-axis. Note that $P$ is less than one only when the signal strength $\mu_i$ are small and $M$ is small. 
Thus, we have, \[\frac{\mbox{EDD}(N)}{\mbox{EDD}(M)} \geq \frac N M \cdot \frac{Q_M}{Q_N} \cdot \Gamma (1+o(1)),\] for sufficiently large $M$ and signal strength $\Delta$.}
\begin{figure}
\begin{center}
\includegraphics[width = .9\linewidth]{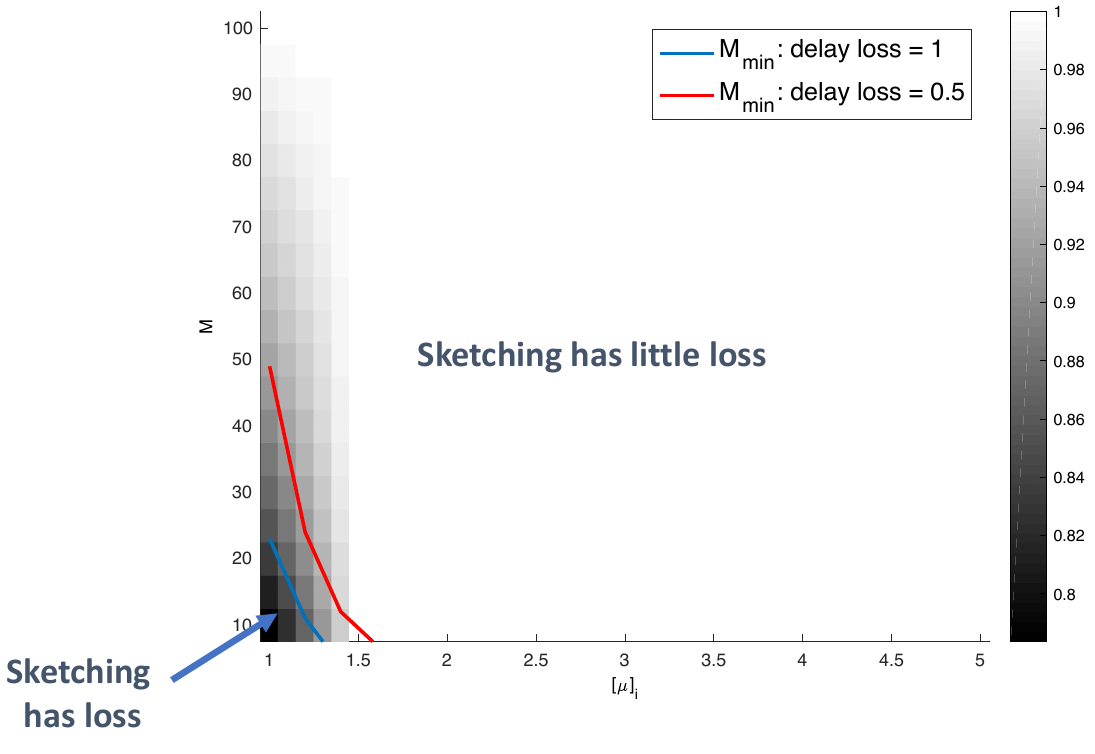}
\end{center}
\caption{\yang{The $P$ factor defined in (\ref{P}) for different $M$ and $[\mu]_i$, when the post-change mean vector has entries all equal to $[\mu]_i$. Assume $N=100$. The white regions correspond to $P \geq 1$, and dark regions correspond to $P<1$ and the darker, the smaller $P$ is (note that the smallest $P$ in this graph is above 0.75). We also plot the $M_{\min}$ (defined later in (\ref{Mmin})) required in these cases such that the EDD of the sketching procedure is no more than $\delta$ larger than the corresponding procedure without sketching (fixing ARL = 5000), for $\delta = 1$ and $\delta = 0.5$. The $M_{\min}$ are obtained by Monte Carlo simulation. The $M_{\min}$ versus $[\mu]_i$ correspond to the blue and the red curves, respectively. Above these two curves, the EDD with sketching is {\it almost the same} as before (without sketching), i.e., the regime where {\it sketching has little loss}. The left-bottom corner corresponds to the region where sketching has more loss. 
This also shows that indeed $P<1$ is an indicator of significant performance loss using sketching. }}
\label{fig:P}
\end{figure}

Using Corollary \ref{M_over_b_range}, we have that $Q_M \in (0.5,2)$ and $Q_N \in (0.5,2)$, and hence, \yang{an lower bound of} the ratio $\mbox{EDD}(N)/\mbox{EDD}(M)$ is 
between $(1/4)  (N/M)  \Gamma$ and $4  (N/M)  \Gamma$, \yang{for large $M$ or large signal strength}. 

Next, we will bound $\Gamma$ when $A$ is a Gaussian matrix and an expander graph, respectively.

\subsection{Bounding $\Gamma$}
\label{sec:choice}

\subsubsection{Gaussian matrix} \label{sec:Gaussian}

Consider $A\in \mathbb{R}^{M\times N}$ whose entries are i.i.d. Gaussian
with  zero mean and variance equal to $1/M$. First, we have the following lemma
\begin{lemma}[\cite{ruben79}]\label{quotient_dist}
Let $A\in\mathbb{R}^{M\times N}$ have i.i.d. $\mathcal{N}(0, 1)$ entries. Then for any fixed vector $\mu$, we have that
\begin{equation}\label{quotient_result}
\Gamma \sim\mbox{Beta}\left(\frac{M}{2},\frac{N-M}{2}\right).
\end{equation}
\end{lemma}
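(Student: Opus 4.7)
The plan is to exploit the orthogonal invariance of the distribution of $A$. First, I would observe that for any $N\times N$ orthogonal matrix $Q$, the matrix $AQ$ has the same i.i.d.\ Gaussian distribution as $A$, and its SVD can be written as $U\Sigma(Q\transpose V)\transpose$. Hence the right singular-vector matrix satisfies $V\stackrel{d}{=}QV$ for every orthogonal $Q$, and equivalently $P:=VV\transpose$ satisfies $P\stackrel{d}{=}QPQ\transpose$. Choosing $Q$ so that $Q\transpose(\mu/\|\mu\|)=e_1$, I get
\[
\|V\transpose\mu\|^2 \;\stackrel{d}{=}\; \|\mu\|^2\,\|V\transpose e_1\|^2 \;=\; \|\mu\|^2\, e_1\transpose P\, e_1,
\]
so it suffices to compute the law of $e_1\transpose P e_1$.

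Next, I would use the standard trick of moving the randomness from the subspace to the vector. Since $P=VV\transpose$ is the orthogonal projection onto the row space of $A$, and since we have just shown that $P$ is invariant under orthogonal conjugation, the row space of $A$ is uniformly distributed on the Grassmannian $G(M,N)$. Thus I can write $P=U P_0 U\transpose$, where $P_0$ is the fixed projection onto the first $M$ coordinates and $U$ is Haar-distributed on $O(N)$. Then
\[
e_1\transpose P e_1 \;=\; \|P_0\, U\transpose e_1\|^2,
\]
and $U\transpose e_1$ is uniformly distributed on $S^{N-1}$. Representing it as $g/\|g\|$ with $g\sim\mathcal{N}(0,I_N)$ gives
\[
\Gamma \;\stackrel{d}{=}\; \frac{g_1^2+\cdots+g_M^2}{g_1^2+\cdots+g_N^2} \;=\; \frac{X}{X+Y},
\]
where $X=\sum_{i=1}^{M}g_i^2\sim\chi^2_M$ and $Y=\sum_{i=M+1}^{N}g_i^2\sim\chi^2_{N-M}$ are independent. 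The classical identity for the ratio of independent chi-square variables then identifies $X/(X+Y)$ with the $\mathrm{Beta}(M/2,(N-M)/2)$ law, which is the claim.

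The argument is a chain of invariance reductions followed by a well-known beta-distribution identity, so I do not expect any hard calculation. The step requiring the most care is the invariance statement for $V$: strictly speaking, the SVD is only unique up to signs (or, on coincident singular values, up to an orthogonal block), so literally writing $V\stackrel{d}{=}QV$ requires fixing a measurable selection. Working throughout with the projector $P=VV\transpose$, which is uniquely defined almost surely since $A$ has full row rank with probability one, neatly sidesteps this ambiguity and justifies the invariance argument rigorously.
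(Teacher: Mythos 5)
Your proposal is correct, but note that the paper itself does not prove this lemma at all: it is stated as a known result and attributed to the reference \cite{ruben79}, with no argument given in the appendix. So there is no in-paper proof to match; what you have supplied is a complete, self-contained derivation of the cited fact. Your chain of reductions is the standard one and is sound: orthogonal invariance of the Gaussian ensemble shows the row-space projector $P=VV^{\intercal}$ is conjugation-invariant, hence the row space is Haar-distributed on the Grassmannian; projecting a fixed unit vector onto a uniformly random $M$-dimensional subspace is then equivalent in law to projecting a uniform point of $S^{N-1}$ onto a fixed coordinate subspace; and the representation $g/\|g\|$ with $g\sim\mathcal{N}(0,I_N)$ reduces $\Gamma$ to $X/(X+Y)$ with independent $X\sim\chi^2_M$, $Y\sim\chi^2_{N-M}$, which is $\mathrm{Beta}(M/2,(N-M)/2)$. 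Your remark about the non-uniqueness of the SVD is the right care to take, and passing to the projector $P$ (well defined a.s.\ since $A$ has full row rank with probability one) resolves it cleanly. Two cosmetic points only: you reuse the symbol $U$ both for the left singular vectors of $A$ and for the Haar orthogonal matrix, which should be renamed; and it is worth observing that $\Gamma$ is scale-invariant, so the result is unaffected by whether the entries have variance $1$ (as in the lemma statement) or $1/M$ (as in the surrounding text of Section IV-B).
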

\yc{More related results can be found in \cite{betaDistribution90}.}
Since the $\mbox{Beta}(\alpha, \beta)$ distribution has a mean $\alpha/(\alpha+\beta)$, we have that
\[
\mathbb{E}\left\{\Gamma\right\} = \frac{M/2}{M/2 + (N-M)/2} = \frac{M}{N}.
\]
We may also show that, provided $M$ and $N$ grow proportionally, $\Gamma$ converges to its mean value at a rate exponential in $N$. Define $\delta\in(0,1)$ to be
\begin{equation}\label{delta_def}
\delta\triangleq\displaystyle\lim_{N\rightarrow\infty}\frac{M}{N}.
\end{equation}
We have the following result. 
\begin{thm}[Gaussian $A$]\label{mean_concentration}
Let $A\in\mathbb{R}^{M\times N}$ have entries i.i.d. $\mathcal{N}(0, 1)$.  
Let $N\rightarrow\infty$ such that (\ref{delta_def}) holds. Then  for $0<\epsilon<\min(\delta,1-\delta)$, we have that
\begin{equation}\label{concentration_result}
\mathbb{P}\left\{\delta-\epsilon<\Gamma<\delta+\epsilon\right\}\rightarrow 1,
\end{equation}
at a rate exponential in $N$. \yang{Hence, for  Gaussian $A$, $\Gamma$ is approximately $M/N$ with probability $1$.}
\end{thm}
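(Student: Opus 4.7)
The plan is to start from Lemma 1, which already tells us that $\Gamma \sim \mathrm{Beta}(M/2, (N-M)/2)$, and reduce the concentration statement to a more classical concentration fact about chi-squared random variables. Recall the standard representation: if $X \sim \chi^2_M$ and $Y \sim \chi^2_{N-M}$ are independent, then $X/(X+Y) \sim \mathrm{Beta}(M/2, (N-M)/2)$. So we may write $\Gamma \stackrel{d}{=} X/(X+Y)$ and study the joint behavior of $X$ and $Y$ instead of working directly with the Beta density (whose tails are awkward to integrate).

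Next I would apply the Laurent–Massart tail bounds (or equivalently a Chernoff bound on the chi-squared moment generating function) to obtain, for any fixed $\eta \in (0,1)$, the deviation estimates
\begin{equation*}
\mathbb{P}\bigl(|X/M - 1| > \eta\bigr) \leq 2\exp(-c_1 M \eta^2), \qquad
\mathbb{P}\bigl(|Y/(N-M) - 1| > \eta\bigr) \leq 2\exp\bigl(-c_2 (N-M)\eta^2\bigr),
\end{equation*}
for absolute constants $c_1, c_2 > 0$, valid for $\eta$ sufficiently small. Under the assumption $M/N \to \delta \in (0,1)$, both $M$ and $N-M$ grow linearly in $N$, so each right-hand side is $\exp(-\Theta(N)\eta^2)$.

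On the intersection of the two good events, a short computation gives $X/N \in ((1-\eta)\delta, (1+\eta)\delta)$ and $Y/N \in ((1-\eta)(1-\delta), (1+\eta)(1-\delta))$ for all large $N$, whence
\begin{equation*}
\Gamma \;=\; \frac{X/N}{X/N + Y/N}
\;\in\; \left(\frac{(1-\eta)\delta}{(1-\eta)\delta + (1+\eta)(1-\delta)},\; \frac{(1+\eta)\delta}{(1+\eta)\delta + (1-\eta)(1-\delta)}\right).
\end{equation*}
A Taylor expansion of the two endpoints around $\eta=0$ shows they both equal $\delta + O(\eta)$, so given any target $\epsilon \in (0, \min(\delta, 1-\delta))$ we may choose $\eta = \eta(\epsilon, \delta)$ small enough that the containing interval lies inside $(\delta - \epsilon, \delta + \epsilon)$. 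A union bound on the complementary chi-squared deviation events then yields
\begin{equation*}
\mathbb{P}(\delta - \epsilon < \Gamma < \delta + \epsilon) \;\geq\; 1 - 4 \exp(-c(\epsilon, \delta) N),
\end{equation*}
which is the claimed convergence at an exponential rate.

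The only slightly delicate step is the last one: one must verify that the linearization of the map $(x,y) \mapsto x/(x+y)$ near $(\delta, 1-\delta)$ is well-conditioned (which it is, since $(1-\delta)/(\delta+(1-\delta))^2 = 1-\delta$ and $-\delta/(\delta+(1-\delta))^2 = -\delta$ are both bounded), and then translate $\eta$-level fluctuations into $\epsilon$-level fluctuations uniformly in $N$. Everything else is standard; no new machinery beyond Lemma 1 and chi-squared Chernoff bounds is required, and the exponent in $N$ comes directly from those bounds.
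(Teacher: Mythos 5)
Your argument is correct, but it takes a genuinely different route from the paper. The paper works directly with the distribution function of the $\mbox{Beta}(M/2,(N-M)/2)$ law, i.e.\ the regularized incomplete beta function $I_b(M/2,(N-M)/2)$, and invokes a large-deviations asymptotic for that special function (together with the reflection identity $I_b(\alpha,\beta)=1-I_{1-b}(\beta,\alpha)$ for the upper tail) to conclude that $\tfrac{1}{N}\ln\mathbb{P}\{\Gamma\le\delta-\epsilon\}$ converges to the explicit negative constant $-\lmb\delta\ln\lsb\tfrac{\delta}{\delta-\epsilon}\rsb+(1-\delta)\ln\lsb\tfrac{1-\delta}{1-\delta+\epsilon}\rsb\rmb$, and symmetrically for the upper tail. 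You instead pass through the beta--gamma representation $\Gamma\stackrel{d}{=}X/(X+Y)$ with $X\sim\chi^2_M$, $Y\sim\chi^2_{N-M}$ independent, apply chi-squared Chernoff/Laurent--Massart bounds to each factor, and transfer the concentration through the smooth, monotone map $(x,y)\mapsto x/(x+y)$; since $M$ and $N-M$ both grow linearly in $N$ under the proportional-growth assumption $M/N\to\delta\in(0,1)$, a union bound gives the exponential rate. Your route is more elementary and self-contained---it avoids the specialized asymptotic expansion of the incomplete beta function that the paper imports from an external reference---at the cost of producing a non-sharp constant $c(\epsilon,\delta)$ in the exponent, whereas the paper's argument identifies the exact large-deviation rate (a relative-entropy-type expression). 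For the purposes of the theorem as stated, which only asserts convergence ``at a rate exponential in $N$,'' both arguments suffice; your handling of the linearization step (checking that the partial derivatives of $x/(x+y)$ at $(\delta,1-\delta)$ are bounded, so $\eta$-level fluctuations of $X/N$ and $Y/N$ translate into $O(\eta)$-level fluctuations of $\Gamma$ uniformly in $N$) is exactly the point that needs care, and you have addressed it.
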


\begin{figure}
\begin{center}
\includegraphics[width = .5\linewidth]{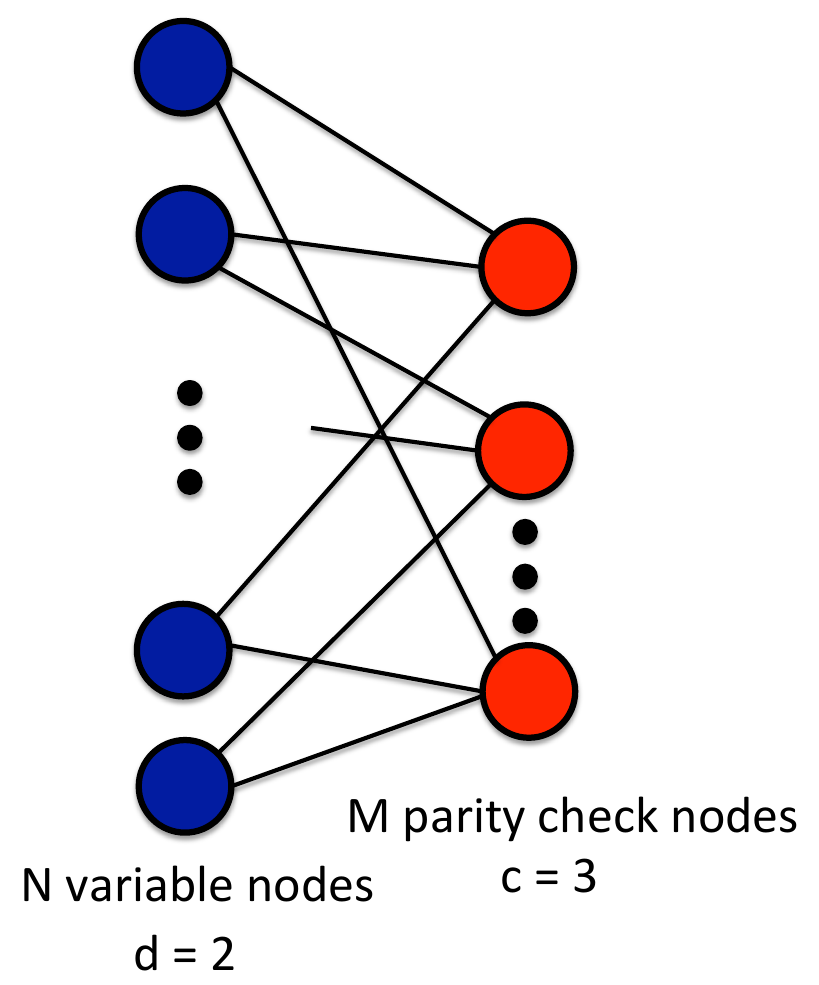}
\end{center}
\caption{Illustration of a bipartite graph with $d=2$ and $c = 3$. Following coding theory terminology, we call the left variables nodes (there are $N$ such variables), which correspond to the entries of $x_t$, and the right variables parity check nodes (there are $M$ such nodes), which correspond to entries of $y_t$. In a bipartite graph, connections between the variable nodes are not allowed. The adjacency matrix of the bipartite graph corresponds to our $A$ or $A_t$. }
\label{fig:EG}
\end{figure}

\subsubsection{Sparse sketching matrix $A$}\label{sec:expander}

We can show that for certain sparse 0-1 matrices $A$ (in particular, the expander graphs), $\Gamma$ is also bounded. This holds for the ``one-sided'' changes, i.e., the post-change mean vector is element-wise positive. Such a scenario is encountered in environmental monitoring (see, e.g., \cite{XieSiegmund2012,S3T2017}). 
These sparse sketching matrices $A$ enable efficient sketching schemes, as each entry in the sketching vector only requires collecting information from few dimensions of the original data vector.

Assume $[\mu]_i\geq 0$ for all $i$. Let $A \in \mathbb{R}^{M\times N}$ be consisting of binary entries, which corresponds to a bipartite graph, illustrated in Fig. \ref{fig:EG}. 
We further consider a bipartite graph with regular left degree $c$ (i.e., the number of edges from each variable node is $c$), and regular right degree $d$ (i.e., the number of edges from each parity check node is $d$), as illustrated in Fig. \ref{fig:EG}. Hence, this requires $Nc = Md$. 

Expander graphs satisfy the above requirements, and they have been used in compressed sensing to sense a sparse vector (e.g., \cite{XuHassibi2007}). In particular, a matrix $A$ corresponds to a $(s,\epsilon)$-expander graph with regular right degree $d$ if and only if each column  of $A$ has exactly $d$ ``1''s, and for any set $S$ of right nodes with $|S|\leq s$, the set of neighbors $\mathcal{N}(S)$ of the left nodes has size $\mathcal{N}(S)\geq (1-\epsilon) d |S|$. If it further holds that each row of $A$ has $c$ ``1''s,  we say $A$ corresponds to a  $(s,\epsilon)$-expander with regular right degree $d$ and regular left degree $c$.
The existence of such expander graphs is established in \cite{BM01}:
\begin{lemma}[\cite{BM01}]\label{lem:goodexpander}
For any fixed $\epsilon >0$ and $\rho \triangleq M/N <1$, when $N$ is sufficiently large, there always exists an $(\alpha N, \epsilon)$ expander with a regular right degree $d$ and a regular left degree $c$ for some constants $\alpha\in(0,1)$, $d$ and $c$.
\end{lemma}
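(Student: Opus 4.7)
The plan is to establish the result via the probabilistic method, which is the classical route to existence proofs for constant-degree expanders. I would construct a random biregular bipartite graph with $N$ left nodes of degree $d$ and $M$ right nodes of degree $c$, where the identity $Nd = Mc$ forces $c = d/\rho$ once $d$ has been chosen; integrality of $c$ can be arranged by taking $d$ divisible by an appropriate factor, or handled by a standard rounding argument when $\rho$ is irrational. Taking the adjacency matrix of this random graph as the candidate $A$, it suffices to show that the expansion property fails with probability strictly less than one for suitable $d$ and $\alpha$.

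A convenient model is the configuration model: list $d$ half-edges at each left node and $c$ half-edges at each right node, then choose a uniformly random perfect matching between them. For any fixed set $S$ of $s$ nodes on the side governing the expansion condition, and any fixed set $T$ of $t$ nodes on the opposite side, a direct counting argument gives
\[
\mathbb{P}[\mathcal{N}(S)\subseteq T] \;\le\; \binom{ct}{ds}\bigg/\binom{Nd}{ds} \;\le\; \left(\frac{t}{M}\right)^{ds}.
\]
Since expansion fails for $S$ precisely when $|\mathcal{N}(S)| \le \lfloor (1-\epsilon)ds\rfloor$, taking a union bound over $S$ with $|S|\le \alpha N$ and over the worst-case target $T$ of size $t=\lfloor (1-\epsilon)ds\rfloor$ yields
\[
\mathbb{P}[\text{not an expander}] \;\le\; \sum_{s=1}^{\alpha N} \binom{N}{s}\binom{M}{t}\left(\frac{t}{M}\right)^{ds}.
\]

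Applying the elementary estimates $\binom{N}{s}\le (eN/s)^s$ and $\binom{M}{t}\le (eM/t)^t$, the summand can be rewritten in the form $[g(s/N)]^s$ for an explicit $g$ depending on $d$, $\rho$, and $\epsilon$. The decisive observation is that the factor $(t/M)^{ds}$ contributes a term whose logarithm is linear in $d$, while the binomial-coefficient factors contribute only a term polynomial in $d$ inside the bracket; hence for $d$ large enough (depending on $\epsilon$ and $\rho$) and any $\alpha$ small enough, one can force $g(x)<1$ uniformly on $(0,\alpha]$, so that the failure probability is bounded by a convergent geometric series that tends to $0$ as $N\to\infty$. The probabilistic method then produces a deterministic graph with the desired $(\alpha N,\epsilon)$-expansion for all sufficiently large $N$.

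The main obstacle is obtaining this uniform bound on $g$ across the entire range $s\in[1,\alpha N]$. A natural way to handle it is to split into two regimes: a \emph{small-$s$} regime, where the $eN/s$ prefactor is large but is absorbed by requiring $\epsilon d > 1$, so the remaining factor behaves like $x^{\epsilon d - 1}$ and vanishes at $x\to 0$; and a \emph{large-$s$} regime near $s=\alpha N$, where the binomial coefficients are the bottleneck and one passes to sharper entropy-based bounds, choosing $\alpha$ small enough that the resulting exponent remains negative. Matching the two regimes determines admissible triples $(d,c,\alpha)$ with $c=d/\rho$, which certifies the claim.
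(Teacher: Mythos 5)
The paper does not actually prove Lemma~\ref{lem:goodexpander}: it is imported wholesale by citation to \cite{BM01}, and nothing in the appendix touches it. Your proposal therefore necessarily takes a different route --- a self-contained existence proof --- and the route you chose is the classical first-moment argument for lossless bipartite expanders. The core of it is sound: the matching probability $\binom{ct}{ds}/\binom{Nd}{ds}\le (t/M)^{ds}$ is the standard estimate (using $\binom{a}{k}/\binom{b}{k}\le(a/b)^k$ and $Nd=Mc$), the union bound over $S$ and over targets $T$ of size $t=\lfloor(1-\epsilon)ds\rfloor$ is the right event decomposition, and your per-$s$ exponent works out to $1+(1-\epsilon)d+\epsilon d\log\bigl((1-\epsilon)d/\rho\bigr)+(\epsilon d-1)\log(s/N)$, which is driven to $-\infty$ as $s/N\to 0$ exactly under your condition $\epsilon d>1$; monotonicity in $s/N$ then lets a small enough $\alpha$ make the whole sum less than $1$. (Your description of the binomial factors as contributing only ``polynomially in $d$'' is imprecise --- the $(eM/t)^t$ factor contributes $e^{(1-\epsilon)ds}$, which is also exponential in $d$ --- but the conclusion you draw from the bookkeeping is correct.) Two technicalities deserve explicit treatment before this is a complete proof. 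First, the configuration model produces multigraphs, so the resulting $A$ need not be a $0$--$1$ matrix with exact row and column sums; you should either condition on the matching being simple (for constant $c,d$ this event has probability bounded below by a constant independent of $N$, so the union bound survives the conditioning) or argue that collapsing parallel edges preserves $\mathcal{N}(S)$ and then repair regularity. Second, the paper's own definition is ambiguous about which side of the bipartite graph the expanding sets $S$ live on (its text assigns degree $d$ both to ``right nodes'' and to columns of $A$); your proof should fix one convention consistent with how the expansion constant $(1-\epsilon)d$ is used in Theorem~\ref{thm:expander}, namely that $d$ is the degree of the nodes in $S$. What your approach buys is transparency and explicit admissible constants $(d,c,\alpha)$ with $d>1/\epsilon$ and $c=d/\rho$; what the paper's citation buys is brevity and, in \cite{BM01}, sharper trade-offs between $\alpha$, $d$, and $\epsilon$ than a crude union bound provides.
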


\begin{thm}[Expander $A$]\label{thm:expander}
If $A$ corresponds to a $(s, \epsilon)$-expander with regular degree $d$ and regular left degree $c$, for any nonnegative vector $[\mu]_i\geq 0$, we have that
\[
\Gamma \geq \frac{M(1-\epsilon)}{dN}.
\]
\end{thm}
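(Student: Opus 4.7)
The plan is to reduce the bound on $\Gamma = \|V^\intercal \mu\|^2 / \|\mu\|^2$ to two elementary inequalities, exploiting only the row/column regularity of $A$ together with the entry-wise nonnegativity of $\mu$. First I would note that $VV^\intercal = A^\intercal(AA^\intercal)^{-1}A$ is the orthogonal projector onto the row space of $A$, so with $w = A\mu \in \mathbb{R}^M$,
\[
\|V^\intercal \mu\|^2 = w^\intercal (AA^\intercal)^{-1} w \geq \frac{\|A\mu\|^2}{\lambda_{\max}(AA^\intercal)}.
\]

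Next I would bound $\lambda_{\max}(AA^\intercal)$ from above. Since every column of $A$ contains exactly $d$ ones and every row exactly $c$ ones (with $Mc = Nd$ by edge-counting), a direct computation gives $\sum_j (AA^\intercal)_{ij} = \sum_k A_{ik} \bigl(\sum_j A_{jk}\bigr) = cd$ for every $i$. Hence $AA^\intercal$ is entry-wise nonnegative with constant row sums $cd$, so Perron--Frobenius (equivalently $\rho(B) \leq \|B\|_\infty$) yields $\lambda_{\max}(AA^\intercal) = cd = Nd^2/M$.

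For the numerator, I would use the observation that $(A^\intercal A)_{jj} = d$ (the column sum of column $j$) while $(A^\intercal A)_{jl} \geq 0$ for $j \neq l$. Writing $A^\intercal A = dI_N + R$ with $R$ nonnegative and zero on the diagonal, the hypothesis $\mu \geq 0$ gives $\mu^\intercal R \mu \geq 0$, so
\[
\|A\mu\|^2 = \mu^\intercal A^\intercal A \mu = d\|\mu\|^2 + \mu^\intercal R \mu \geq d\|\mu\|^2.
\]
Combining the two bounds yields $\Gamma \geq d/(cd) = 1/c = M/(Nd) \geq M(1-\epsilon)/(Nd)$, which is the stated inequality.

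A noteworthy feature of this plan is that the $(s,\epsilon)$-expansion hypothesis is not actually invoked; the regularity of the bipartite degree sequences alone suffices, and the $(1-\epsilon)$ factor emerges only as harmless slack. The expansion condition does however play an essential role \emph{elsewhere} in the theorem, namely guaranteeing via Lemma~\ref{lem:goodexpander} the existence of such a doubly regular $A$ with $M \ll N$. The main obstacle, were one to seek a bound that genuinely scales with $\epsilon$, would be converting the combinatorial expansion inequality $|\mathcal{N}(S)| \geq (1-\epsilon) d|S|$ into a sharper spectral statement on $A^\intercal A$ via, e.g., a unique-neighbor/RIP-$1$ argument for sparse $\mu$ followed by a level-set reduction to general nonnegative $\mu$ — something not needed for the stated inequality.
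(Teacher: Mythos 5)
Your proof is correct and takes essentially the same route as the paper's: both reduce the claim to $\Gamma \geq \|A\mu\|^2/\bigl(\lambda_{\max}(AA^\intercal)\,\|\mu\|^2\bigr)$, bound $\lambda_{\max}(AA^\intercal)\leq cd = Nd^2/M$ from the degree regularity (you via Perron--Frobenius on the constant row sums of $AA^\intercal$, the paper via a direct AM--GM expansion of $\|Ax\|_2^2$ in its Lemma~\ref{lem:sigma}), and obtain $\|A\mu\|^2\geq d\|\mu\|^2$ from nonnegativity of $\mu$ and of the cross terms, exactly as in Lemma~\ref{lem:2norm}. Your observation that the $(s,\epsilon)$-expansion property is never actually invoked and that the factor $(1-\epsilon)$ is pure slack also matches the paper, whose final step silently weakens the bound $\sqrt{d}\|\mu\|_2$ to $\sqrt{d(1-\epsilon)}\|\mu\|_2$.
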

\yang{Hence, for expander graphs, $\Gamma$ is approximately greater than $M/N\cdot (1/d)$, where $d$ is a small number.}
\subsection{Consequence}

Combine the results above, we  shown that for the regime \yang{where $M$ and the signal strength are sufficiently large}, the performance loss can be small (as indeed observed from our numerical examples). In this regime, when $A$ is a Gaussian random matrix, the relative performance measure $\mbox{EDD}(N)/\mbox{EDD}(M)$ is a constant, under the conditions in Corollary \ref{M_over_b_range}.  
Moreover, when $A$ is a sparse 0-1 matrix with $d$ non-zero entries on each row (in particular, an expander graph), the ratio (\ref{ratio1}) $\mbox{EDD}(N)/\mbox{EDD}(M)$ is lower bounded by  $(1/4) \cdot d/(1-\epsilon)$ 
for some small number $\epsilon > 0$, when Corollary \ref{M_over_b_range} holds.

There is one intuitive explanation. Unlike in compressed sensing, where the goal is to recover a sparse signal and one needs the projection to preserve norm up to a factor through the restricted isometry property (RIP) \cite{RIPCandes2008}, our goal is to detect a non-zero vector in Gaussian noise, which is a much simpler task than compressed sensing. Hence, even though the projection reduces the norm of the vector, as long as the projection does not diminish the signal normal below the noise floor. 

\yang{On the other hand, when the signal is weak, and $M$ is not large enough, there can be significant performance loss (as indeed observed in our numerical examples) and we cannot lower bound the relative performance measure. Fortunately, in this regime, we can use our theoretical results in Theorem \ref{thmARL} and Theorem \ref{thmEDD} to design the number of sketches $M$ for an anticipated worst-case signal strength $\Delta$, or determine the infeasibility of the problem, i.e., it is better not to use sketching since the signal is too weak.}

\section{Numerical examples}
\label{sec:numerical}

In this section, we present numerical examples to demonstrate the performance of the sketching procedure. We focus on comparing the sketching procedure with the GLR procedure without sketching (by letting $A = I$ in the sketching procedure). 
We also compare the sketching procedures with a standard multivariate CUSUM using sketches. 

In the subsequent examples, we select ARL to be 5000 to represent a low false detection rate (similar choice has been made in other sequential change-point detection work such as \cite{XieSiegmund2012}). In practice, however, the target ARL value depends on how frequent we can tolerate false detection (e.g., once a month or once a year). 
Below, EDD$_{\rm o}$ denotes the EDD when $A = I$ (i.e., no sketching is used). All simulated results are obtained from $10^4$ repetitions. \yang{We also consider the minimum number of sketches  
\begin{equation}
M_{\min}:  \quad\mbox{EDD}(M_{\min}) \leq \delta + \mbox{EDD}_{\rm o},
\label{Mmin}
\end{equation}
 such that the  corresponding sketching procedure is only $\delta$ sample slower than the full procedure. Below, we focus on the delay loss $\delta = 1$.}

\subsection{Fixed projection, Gaussian random matrix}
\label{sec:numerical_gaussian}

First consider Gaussian $A$ with $N=500$ and different number of sketches $M < N$. 

\subsubsection{EDD versus signal magnitude}
Assume the post-change mean vector has entries with equal magnitude: $[\mu]_i = \mu_0$, to simplify our discussion. Fig. \ref{fig:EDD_comp}(a) shows EDD versus an increasing signal magnitude $\mu_0$. 
Note that when $\mu_0$ and \yang{$M$ are} sufficiently large, the sketching procedure can approach the performance of the procedure using the full data as predicted by our theory. \yang{When signal is weak, we have to use a much larger $M$ to prevent a significant performance loss (and when signal is too weak we cannot use sketching).} Table \ref{tab:minimumM_1} shows $M_{\min}$ \yang{for each signal strength}; we find that when $\mu_0$ is sufficiently large, we may even use $M_{\min}$ less than $30$ for an $N = 500$ to have little performance loss.  Note that here we do not require signals to be sparse. 

\begin{figure}[h]
\begin{center}
\begin{tabular}{c}
\includegraphics[width=0.7\linewidth]{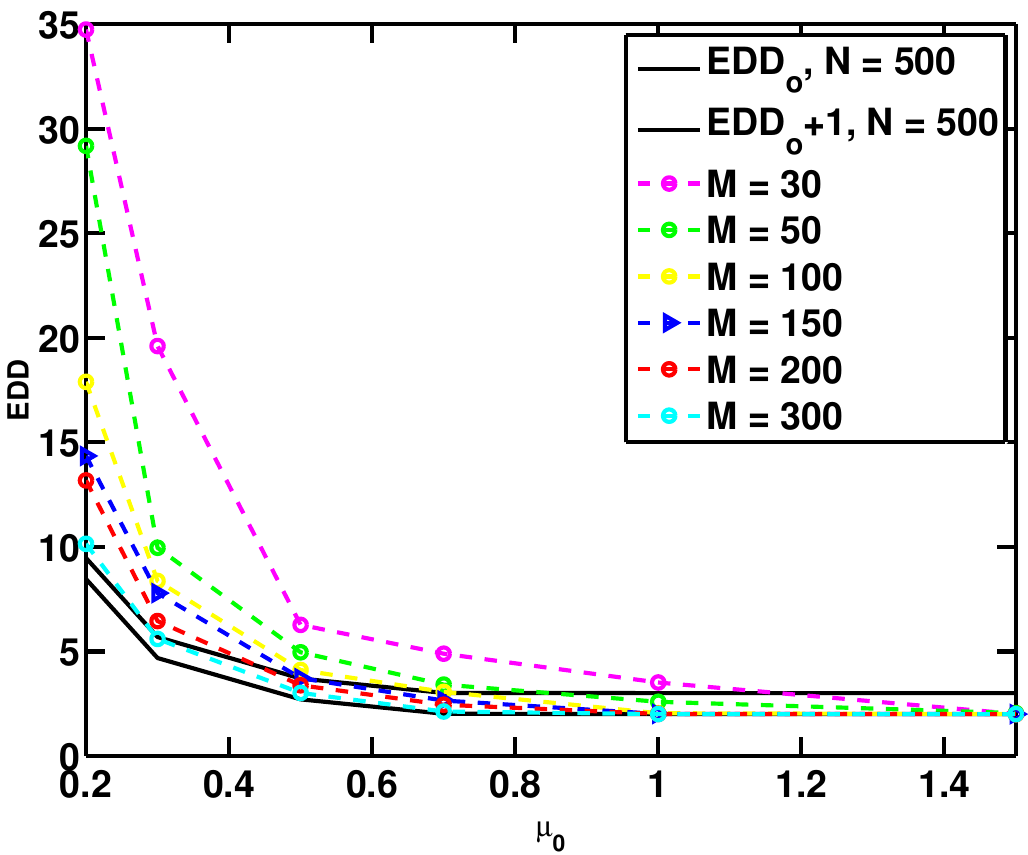}\\
(a) \\
\includegraphics[width=0.7\linewidth]{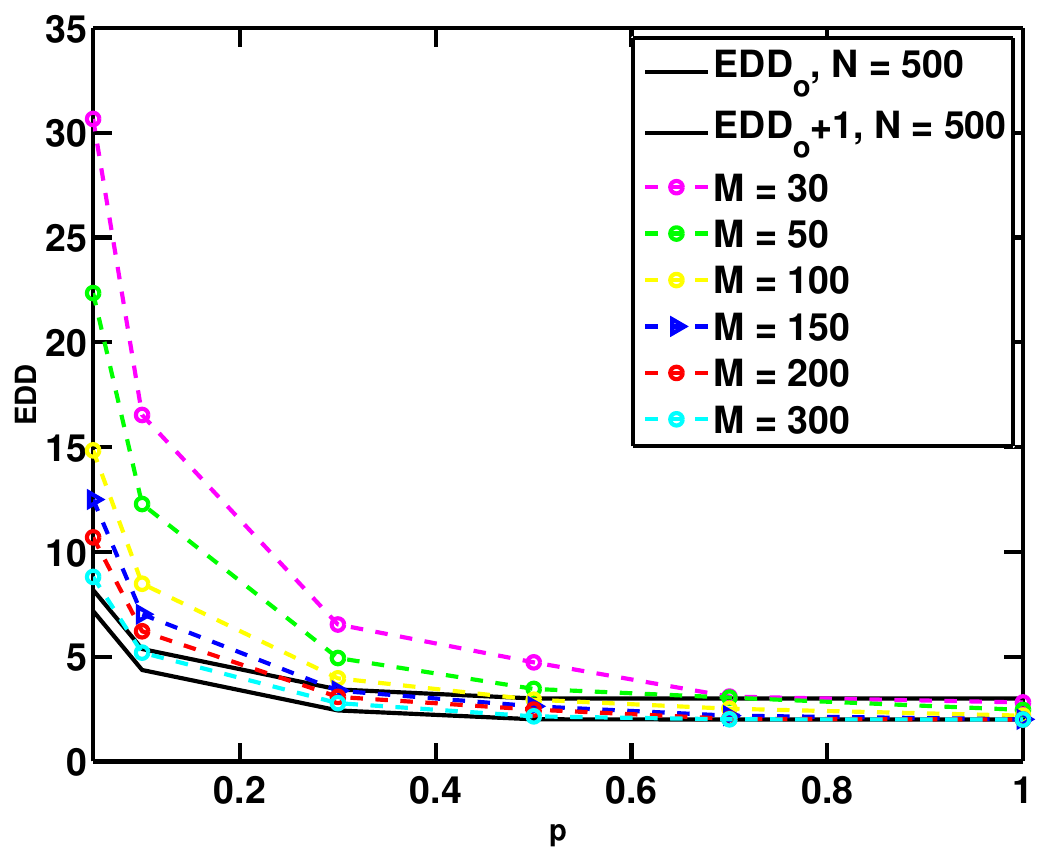}\\
(b)  
\end{tabular}
\end{center}
\caption{$A$ being a fixed Gaussian random matrix: \yc{The standard deviation of each point is less than half of its value.} (a) EDD versus $\mu_0$, when all $[\mu]_i = \mu_0$; (b) EDD versus $p$ when we randomly select $100p\%$ entries $[\mu]_i$ to be one and set the other entries to be zero; the smallest value of $p$ is 0.05.
}
\label{fig:EDD_comp}
\end{figure}

\subsubsection{EDD versus signal sparsity}

Now assume that the post-change mean vector is sparse:  only $100p\%$ entries $\mu_i$ being 1 and the other entries being 0. Fig. \ref{fig:EDD_comp}(b) shows EDD versus an increasing $p$. Note that as $p$ increases, the signal strength also increases, thus the sketching procedure will approach the performance using the full data. Similarly, the $M_{\min}$ required is listed in Table \ref{tab:minimumM_2}. For example, when $p=0.5$, we find that one can use $M_{\min}=100$ for an $N=500$ with little performance loss.

\begin{table}[h!]
\centering
\caption{Assume $A$ being a fixed Gaussian random matrix. The table shows $M_{\min}$ required for various mean shifts $\mu_0$ as shown in Fig. \ref{fig:EDD_comp}(a). Here $N=500$, $w=200$ and all $[\mu]_i = \mu_0$. 
\yc{Numbers in the parentheses are standard deviation of the simulated results.}}
\label{tab:minimumM_1}
\begin{tabular}{c|c|c|c|c|c}
\hline
               $\mu_0$      & $0.3$ & $0.5$ & $0.7$ & $1$ & $1.2$ \\ \hline
$M_{\min}$ & 300        & 150        & 100        & 50       & 30 \\ \hline
\end{tabular}
\end{table}

\begin{table}[h!]
\centering
\caption{$A$ being a fixed Gaussian random matrix. Minimum $M_{\min}$ required for various sparsity setting with parameter $p$ as shown in Fig. \ref{fig:EDD_comp}({b}). $N=500$, $w=200$ and $100p\%$ of entries $[\mu]_i = 1$.
\yc{Number in parentheses are standard deviation of the simulated results.}}
\label{tab:minimumM_2}
\begin{tabular}{c|c|c|c|c|c}
\hline
                  $p$   & $0.1$ & $0.2$ & $0.3$ & $0.5$ & $0.7$ \\ \hline
$M_{\min}$ & 300        & 200        & 150        & 100       & 50 \\ \hline
\end{tabular}
\end{table}

\subsection{Fixed projection, expander graph}

Now assume $A$ being an expander graph with $N=500$ and different number of sketches $M < N$. We run the simulations with the same settings as those in Section \ref{sec:numerical_gaussian}.

\subsubsection{EDD versus signal magnitude}

Assume the post-change mean vector $[\mu]_i = \mu_0$. Fig. \ref{fig:EDD_expander}(a) shows EDD with an increasing $\mu_0$. Note that the simulated EDDs are smaller than those for the Gaussian random projections in Fig. \ref{fig:EDD_comp}. A possible reason is that the expander graph is better at aggregating the signals when $[\mu]_i$ are all positive. However, when $[\mu]_i$ are can be either positive or negative, the two choices of $A$ have similar performance, as shown in Fig. \ref{fig:EDD_Twotype_M_posi_nega}, where $[\mu]_i$ are drawn i.i.d. uniformly from  $[-3, 3]$.

\subsubsection{EDD versus signal sparsity} 

Assume that the post-change mean vector has only $100p\%$ entries $\mu_i$ being one and the other entries being zero. Fig. \ref{fig:EDD_expander}(b) shows the simulated EDD versus an increasing $p$. As $p$ tends to $1$, the sketching procedure approaches the performance using the full data. 

\begin{figure}[h]
\begin{center}
\begin{tabular}{c}
\includegraphics[width=0.7\linewidth]{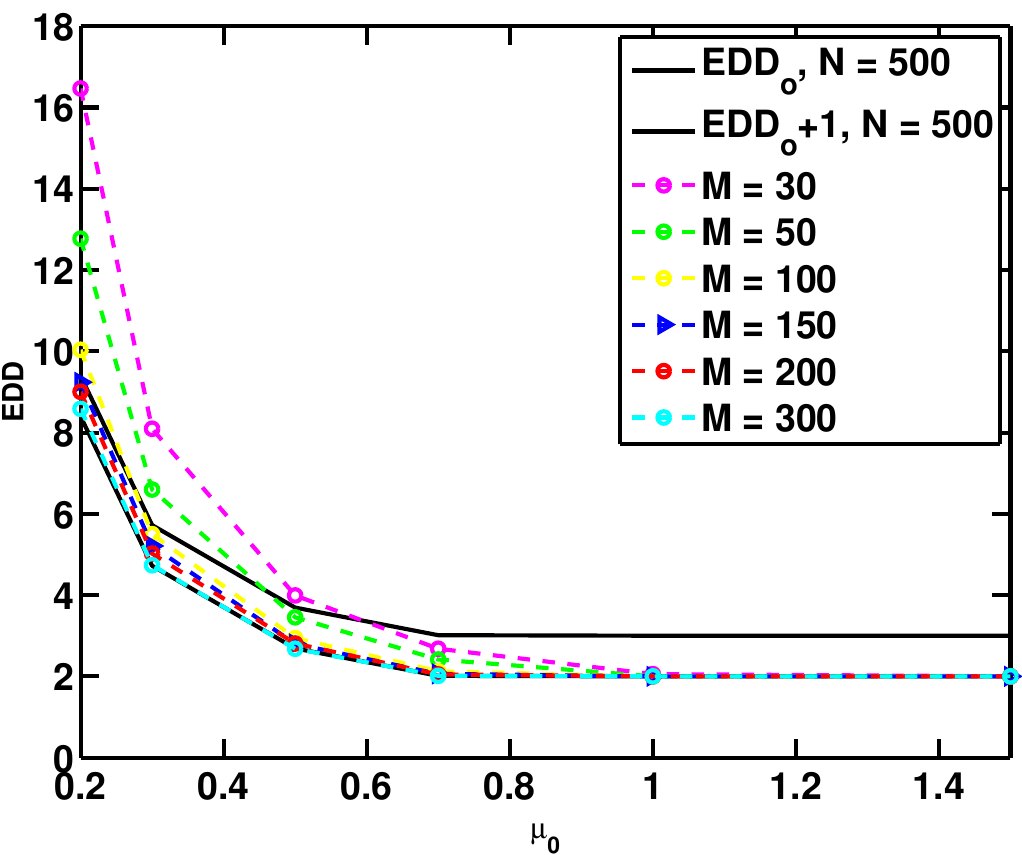}\\
(a)\\
\includegraphics[width=0.7\linewidth]{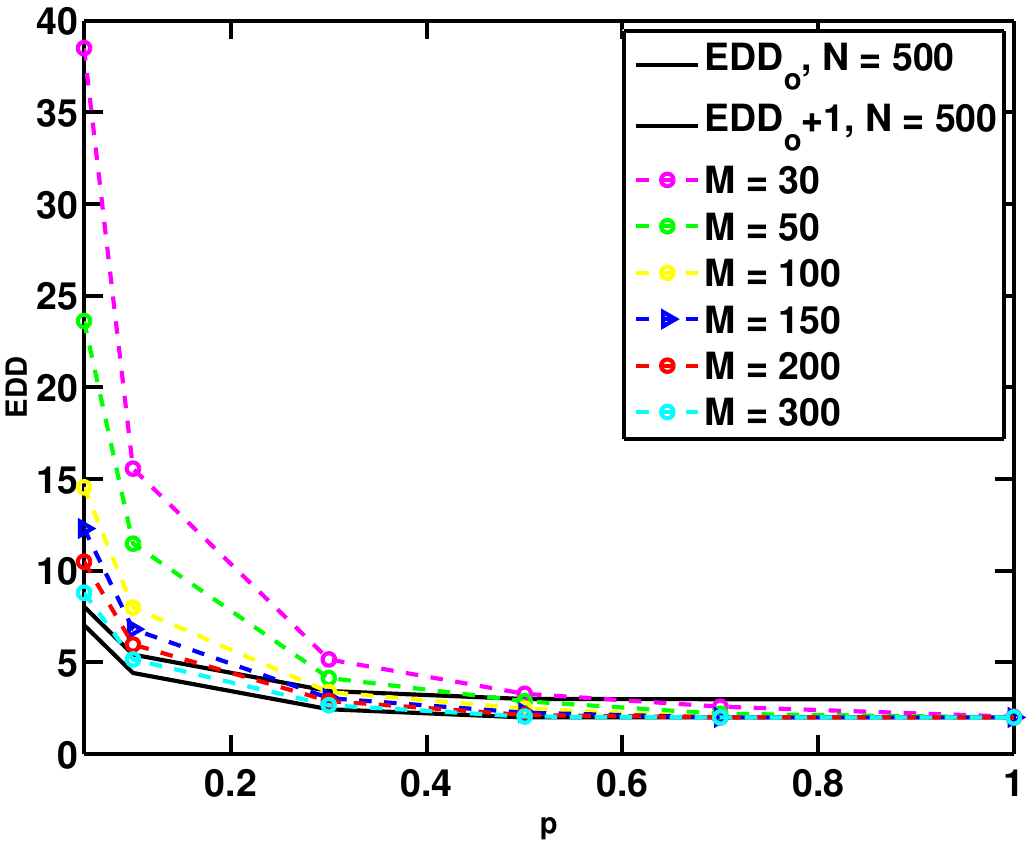}\\
(b)
\end{tabular}
\end{center}
\caption{$A$ being a fixed expander graph.  \yc{The standard deviation of each point is less than half of its value.} (a) EDD versus $\mu_0$,  when all $[\mu]_i = \mu_0$; (b) EDD versus $p$ when we randomly select $100p\%$ entries $[\mu]_i$ to be one and set the other entries to be zero; the smallest value of $p$ is 0.05.}
\label{fig:EDD_expander}
\end{figure}

\begin{figure}[h]
\begin{center}
\begin{tabular}{c}
\includegraphics[width=0.7\linewidth]{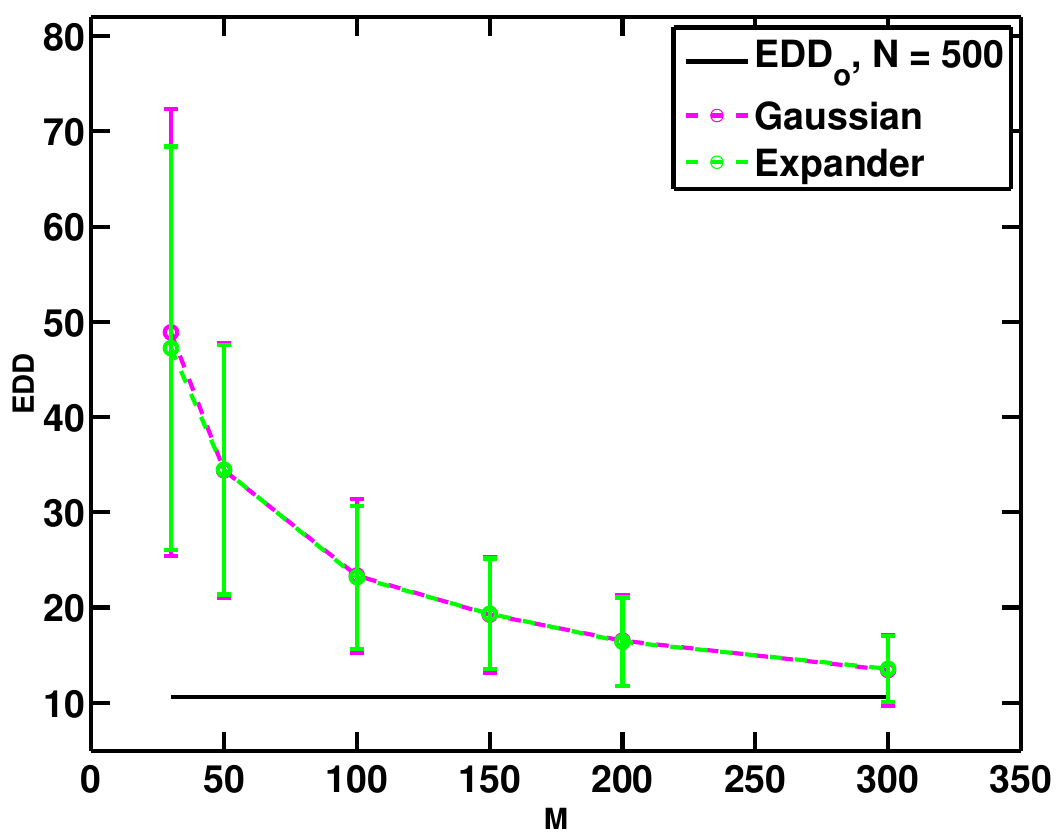}
\end{tabular}
\end{center}
\caption{\yc{Comparison of EDDs for $A$ being a Gaussian random matrix versus an expander graph when $[\mu]_i$'s are i.i.d. generated from $[-3,3]$.}}
\label{fig:EDD_Twotype_M_posi_nega}
\end{figure}

\subsection{Time-varying projections with 0-1 matrices}
\label{sec:varyingnumerical}

To demonstrate the performance of the procedure $T_{\{0,1\}}$ (\ref{proc3}) using time-varying projection with 0-1 entries, again, we consider two cases: the post-change mean vector $[\mu]_i = \mu_0$ and the post-change mean vector has $100p\%$ entries $[\mu]_i$ being one and the other entries being zero.
The simulated EDDs are shown in Fig. \ref{fig:EDD_randomrow_p}. Note that $T_{\{0,1\}}$ can detect change quickly with a small subset of observations. Although EDDs of $T_{\{0,1\}}$ are larger than those for the fixed projections in Fig. \ref{fig:EDD_comp} and Fig. \ref{fig:EDD_expander}, this example shows that projection with 0-1 entries can have little performance loss in some cases, and it is still a viable candidate since such projection means a simpler measurement scheme.

\begin{figure}[h]
\begin{center}
\begin{tabular}{c}
\includegraphics[width=0.7\linewidth]{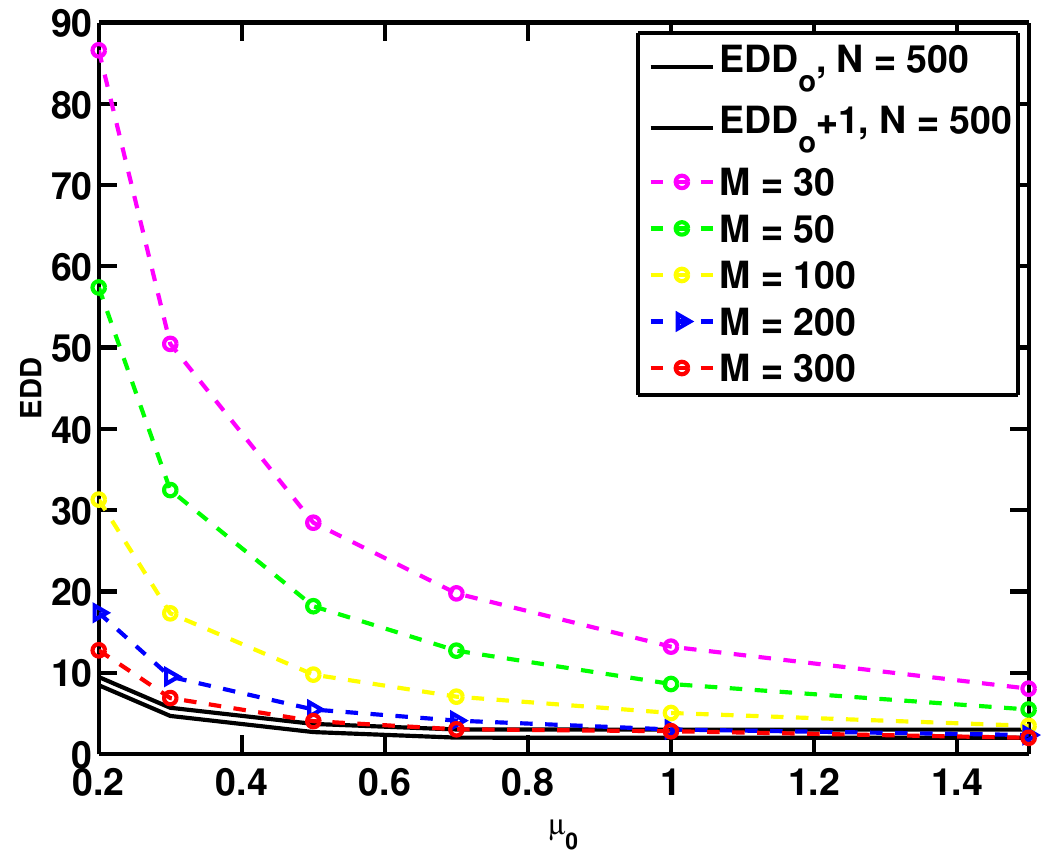}\\
(a)\\
\includegraphics[width=0.7\linewidth]{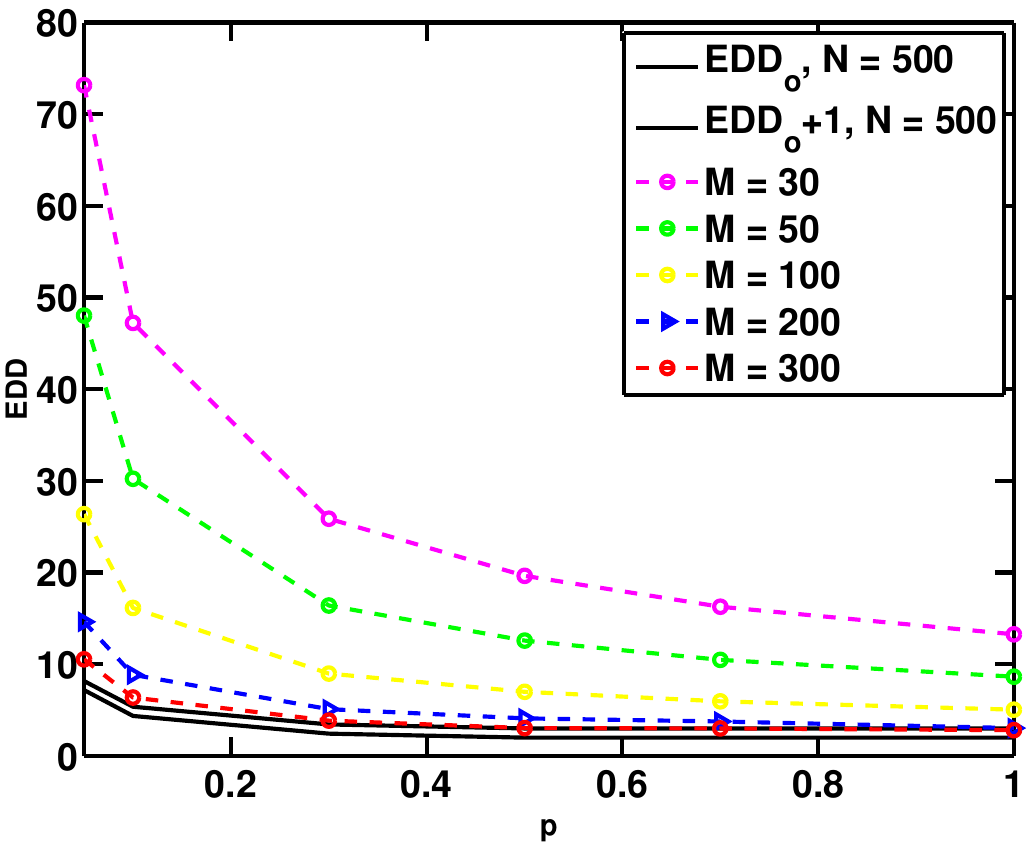}\\
(b)
\end{tabular}
\end{center}
\caption{$A_t'$s are time-varying projections.  \yc{The standard deviation of each point is less than half of its value.} (a) EDD versus $\mu_0$, when all $[\mu]_i = \mu_0$; (b) EDD versus $p$ when we randomly select $100p\%$ entries $[\mu]_i$ to be one and set the other entries to be zero; the smallest value of $p$ is 0.05.}
\label{fig:EDD_randomrow_p}
\end{figure}

\subsection{Comparison with multivariate CUSUM}

We compare our sketching method with a benchmark adapted from the conventional multivariate CUSUM procedure \cite{woodall1985multivariate} for the sketches. A main difference is that in multivariate CUSUM, one needs a prescribed post-change mean vector (which is set to be an all-one vector in our example), rather than estimate it as the GLR statistic does. Hence, its performance may be affected by parameter misspecification. We compare the performance again in two settings, when all $[\mu]_i$ are equal to a constant, and when $100p\%$ entries of the post-change mean vector are positive valued. In Fig. \ref{fig:EDD_compare_multiCUSUM}, the log-GLR based sketching procedure performs much better than the multivariate CUSUM.

  \begin{figure}[h]
\begin{center}
\begin{tabular}{c}
\includegraphics[width=0.7\linewidth]{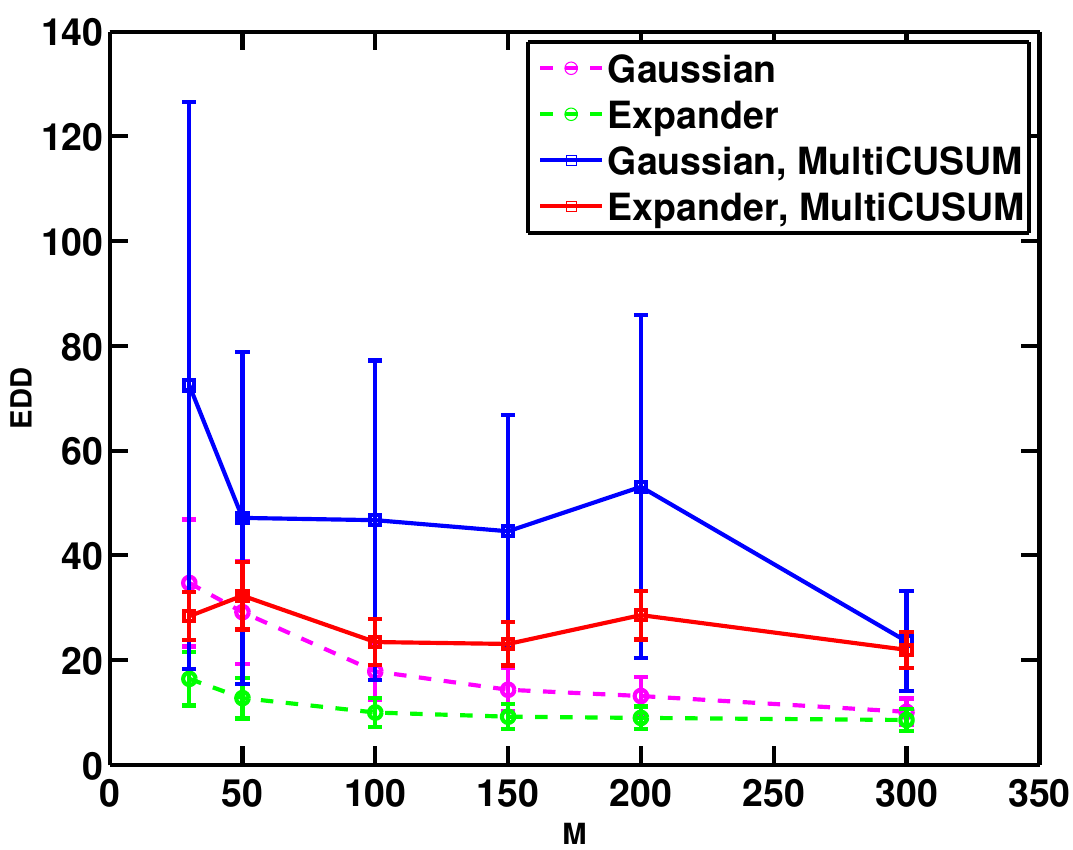}\\
(a)\\
\includegraphics[width=0.68\linewidth]{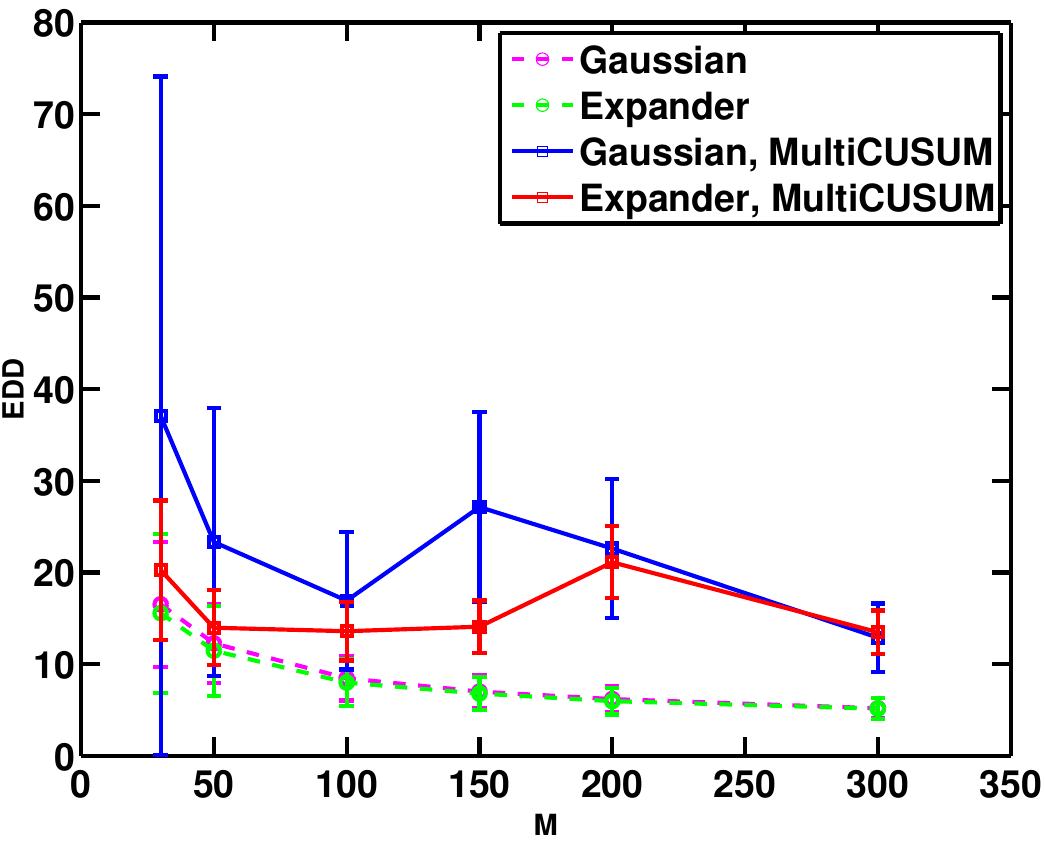}\\
(b)
\end{tabular}
\end{center}
\caption{\yc{Comparison of the sketching procedure with a method adapted from multivariate CUSUM. (a) EDDs versus various $M$s, when all $[\mu]_i = 0.2$; (b) EDDs versus various $M$s, when we randomly select $10\%$ entries $[\mu]_i$ to be one and set the other entries to be zero.}}
\label{fig:EDD_compare_multiCUSUM}
\end{figure}

\section{Examples for real applications}\label{sec:real}
\subsection{Solar flare detection}

We use our method to detect a solar flare in a video sequence from the Solar Data Observatory (SDO)\footnote{The video can be found at http://nislab.ee.duke.edu/MOUSSE/. The Solar Object Locator for the original data is SOL2011-04-30T21-45-49L061C108.}. Each frame is of size $232 \times 292$ pixels, which results in an ambient dimension $N=67744$. In this example, the normal frames are slowly drifting background  sun surfaces, and the anomaly is a much brighter transient solar flare emerges at $t=223$. 
Fig. \ref{fig:solarflare}(a) is a snapshot of the original SDO data at $t=150$ before the solar flare emerges, and Fig. \ref{fig:solarflare}(b) is a snapshot at $t=223$ when the solar flare emerges as a brighter curve in the middle of the image. We  preprocess the data by tracking and removing the slowly changing background with the MOUSSE algorithm \cite{xie2013change} to obtain tracking residuals. The Gaussianity for the residuals, which corresponds to our $x_t$, is verified by the Kolmogorov-Smirnov test. For instance, the p-value is $0.47$ for the signal at $t=150$, which indicates that the Gaussianity is a reasonable assumption.

We apply the sketching procedure with fixed projection to the MOUSSE residuals. Choosing the sketching matrix $A$ to be an $M$-by-$N$ Gaussian random matrix with entries i.i.d. $\mathcal{N}(0,1/N)$. Note that the signal is deterministic in this case. To evaluate our method, we run the procedure $500$ times, each time using a different random Gaussian matrix as the fixed projection $A$. Fig. \ref{fig:EDD_solarflare} shows the error-bars of the EDDs from $500$ runs. As $M$ increases, both the means and standard deviations of the EDDs decrease. When $M$ is larger than $750$,  EDD is often less than $3$, which means that our sketching detection procedure can reliably detect the solar flare with only $750$ sketches. This is a significant reduction and the dimensionality reduction ratio is $750/67744 \approx  0.01$.

\begin{figure}[h!]
\begin{center}
\begin{tabular} {cc}
\includegraphics[width=4cm,height=3cm]{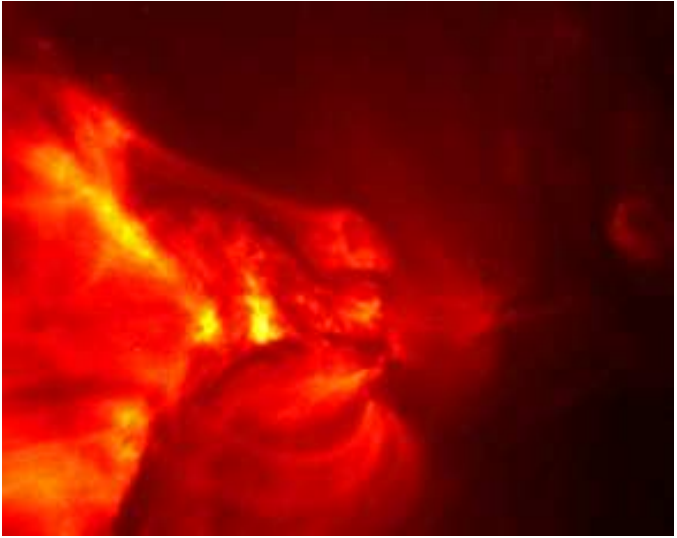} &
\includegraphics[width=4cm,height=3cm]{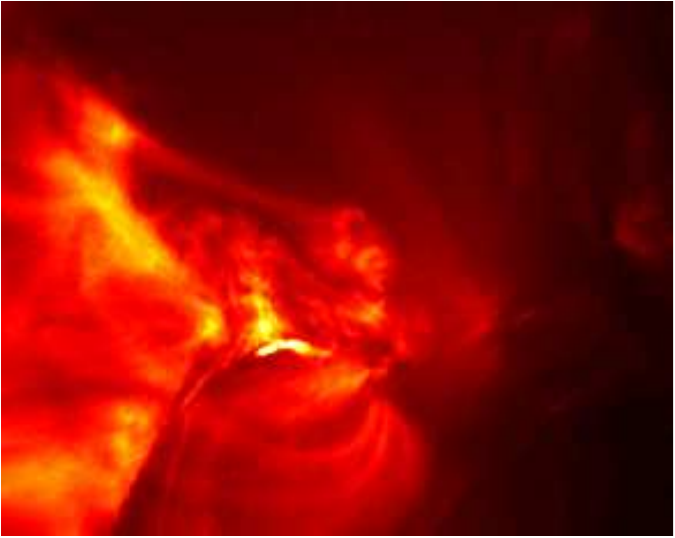} \\
(a) & (b) \\
\end{tabular}
\end{center}
\caption{Snapshot of the original solar flare data (a) at $t=150$; (b) at $t=223$. The true change-point location is at $t=223$.  
}
\label{fig:solarflare}
\end{figure}

\begin{figure}[h]
\begin{center}
\begin{tabular}{c}
\includegraphics[width=0.7\linewidth]{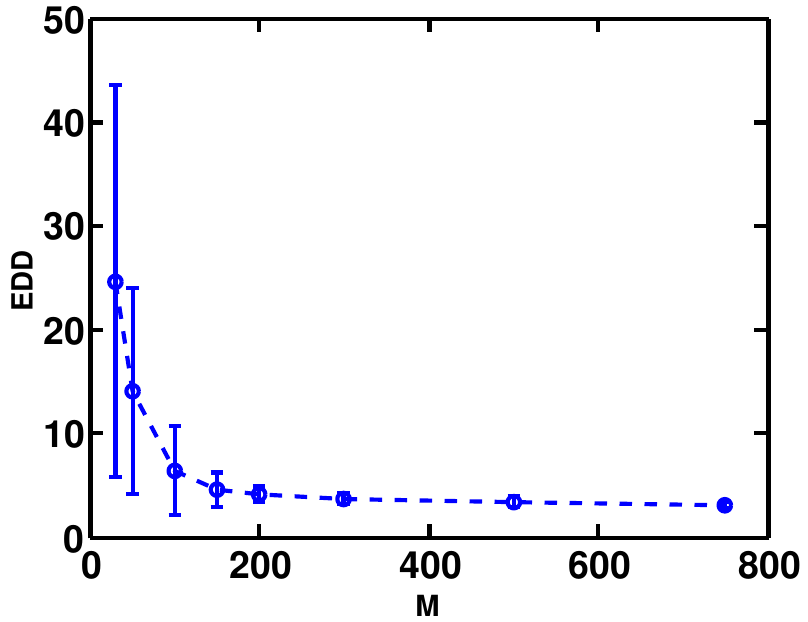}
\end{tabular}
\end{center}
\caption{Solar flare detection: EDD versus various $M$ when $A$ is an $M$-by-$N$ Gaussian random matrix. The error-bars are obtained from $10^4$ repetitions with runs with different Gaussian random matrix $A$.}
\label{fig:EDD_solarflare}
\end{figure}

\subsection{Change-point detection for power systems}

Finally, we present a synthetic example based on the real power network topology. We consider the Western States Power Grid of the United States, which consists of $4941$ nodes and $6594$ edges.  The minimum degree of a node in the network is $1$, as shown in Fig. \ref{fig:topology}\footnote{The topology of the power network can be downloaded at http://networkdata.ics.uci.edu/data/power/ \cite{watts1998collective}.}. The nodes represent generators, transformers, and substations, and edges represent high-voltage transmission lines between them \cite{watts1998collective}.
Note that the graph is sparse and that there are many ``communities'' which correspond to densely connected subnetworks. 

In this example, we simulate a situation for power failure over this large network. \yc{Assume that at each time we may observe the real power injection at an edge. When the power system is in a steady state, the observation is the true state plus Gaussian observation noise \cite{PowerSystemBook04}. We may estimate the true state (e.g., using techniques in \cite{PowerSystemBook04}), subtract it from the observation vector, and treat the residual vector as our signal $x_i$, which can be assumed to be i.i.d. standard Gaussian.} When a failure happens in a power system, there will be a shift in the mean for a small number of affected edges, since in practice, when there is a power failure, usually only a small part of the network is affected simultaneously.

To perform sketching, at each time, we randomly choose $M$ nodes in the network and measure the sum of the quantities over all attached edges as shown in Fig. \ref{fig:powerguide}. This corresponds to $A_t'$s with $N=6594$ and various $M<N$. Note that in this case, our projection matrix is a 0-1 matrix whose structure is constrained by the network topology. Our example is a simplified model for power networks and aims to shed some light on the potential of our method applied to monitoring real power networks.

\begin{figure}[h]
\begin{center}
\begin{tabular}{c}
\includegraphics[width=0.7\linewidth]{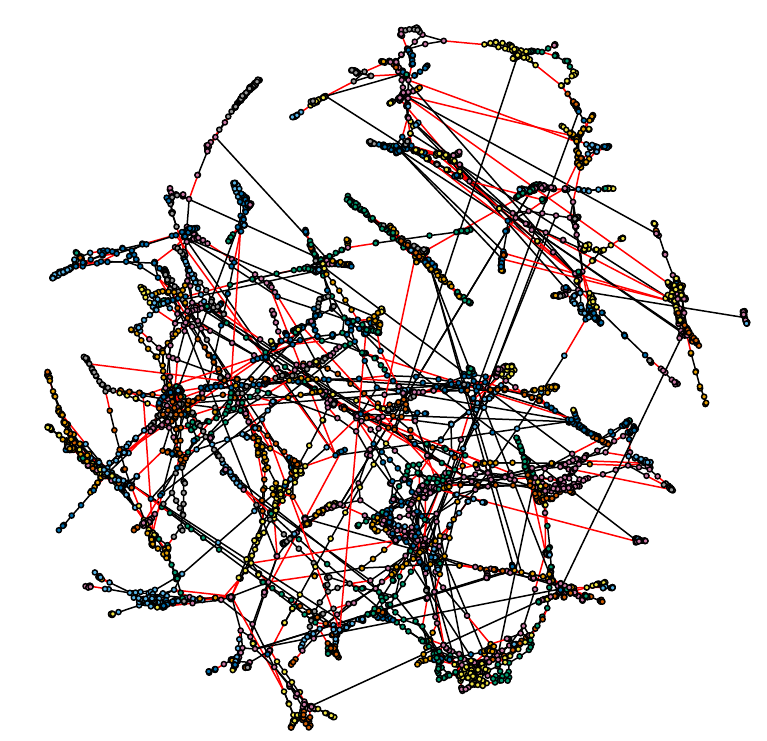}
\end{tabular}
\end{center}
\caption{Power network topology of the Western States Power Grid of the United States.}
\label{fig:topology}
\end{figure}

In the following experiment, we assume that on average $5\%$ of the edges in the network increase by $\mu_0$. Set the threshold $b$ such that the ARL is $5000$. Fig. \ref{fig:EDD_topology} shows the simulated EDD versus an increasing signal strength $\mu_0$. Note that the EDD from using a small number of sketches is quite small if $\mu_0$ is sufficiently large. For example, when $\mu_0=4$, one may detect the change by observing from only $M=100$ sketches (when the EDD is increased only by one sample), which is a significant dimesionality reduction with a ratio of $100/4941 \approx 0.02$.

\begin{figure}[h]
\begin{center}
\begin{tabular}{c}
\includegraphics[width=0.7\linewidth]{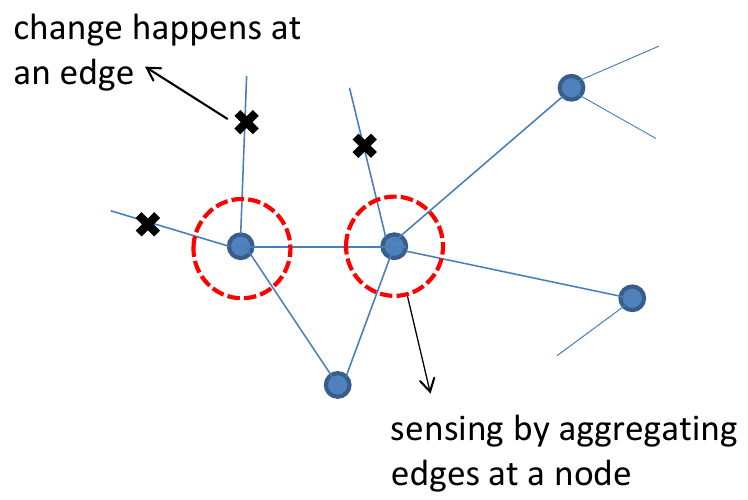}
\end{tabular}
\end{center}
\caption{Illustration of the measurement scheme for a power network. Suppose the physical quantities at edges (e.g., real power flow) at time $i$ form the vector $x_i$, and we can observe the sum of the edge quantities at each node. When there is a power failure, some edges are affected, and their means are shifted.}
\label{fig:powerguide}
\end{figure}

\begin{figure}[h]
\begin{center}
\begin{tabular}{c}
\includegraphics[width=0.7\linewidth]{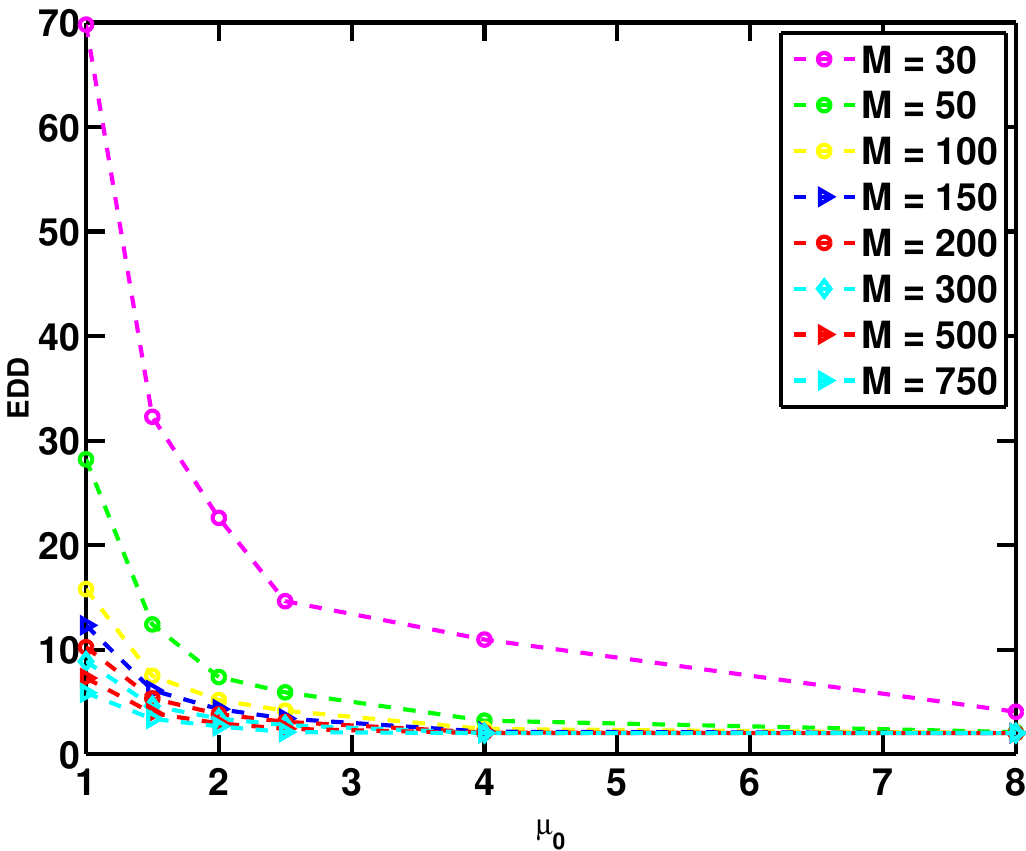}
\end{tabular}
\end{center}
\caption{Power system example: $A$ being a power network topology constrained sensing matrix. \yc{The standard deviation of each point is less than half of the value.} EDD versus $\mu_0$ when we randomly select $5\%$ edges with mean shift $\mu_0$. 
}
\label{fig:EDD_topology}
\end{figure}

\section{Summary and Discussions}
\label{sec:summary}

In this paper, we studied the problem of sequential change-point detection when the observations are linear projections of the high-dimensional signals. The change-point causes an {\it unknown} shift in the mean of the signal, and one would like to detect such a change as quickly as possible. We presented new sketching procedures for fixed and time-varying projections, respectively. \yang{Sketching is used to reduce the dimensionality of the signal and thus computational complexity; it also reduces data collection and transmission burdens for large systems.} 

The sketching procedures were derived based on the generalized likelihood ratio statistic. We analyzed the theoretical performance of our procedures by deriving approximations to the average run length (ARL) when there is no change, and the expected detection delay (EDD) when there is a change. Our approximations were shown to be highly accurate numerically and were used to understand the effect of sketching. 

We also characterized the \yang{relative performance of the sketching procedure compared to that without sketching.} We specifically studied the relative performance measure for fixed Gaussian random projections and expander graph projections. Our analysis and numerical examples showed that the performance loss due to sketching could be quite small \yang{in a big regime when the signal strength and the dimension of sketches $M$ are sufficiently large. Our result can also be used to find the minimum required $M$ given a  worst-case signal and a target ARL. In other words, we can  determine the region where sketching results in little performance loss.} We demonstrate the good performance of our procedure using numerical simulations and two real-world examples for solar flare detection and failure detection in power networks.

\yang{On a high level, although after sketching, the Kullback-Leibler (K-L) divergence becomes smaller, the {\it threshold} $b$ for the same ARL  also becomes smaller. For instance, for Gaussian matrix, the reduction in K-L divergence is compensated by the reduction of the threshold $b$ for the same ARL, because the factor that they are reduced by are roughly the same. This leads to the somewhat counter-intuitive result that the EDDs with and without sketching turns to be similar in this big regime.}

Thus far, we have assumed that the data streams are independent. In practice, if the data streams are dependent on a {\it known} covariance matrix $\Sigma$, we can whiten the data streams by applying a linear transformation $\Sigma^{-1/2}x_t$. Otherwise, the covariance matrix $\Sigma$ can also be estimated using a training stage via regularized maximum likelihood methods (see \cite{FanLiaoLiu2016} for an overview). Alternatively, we may estimate the covariance matrix $\Sigma'$ of the sketches $A\Sigma A\transpose$ or $A_t\Sigma A_t\transpose$ directly, which requires fewer samples to estimate due to the lower dimensionality of the covariance matrix. Then we can build statistical change-point detection procedures using $\Sigma'$ (similar to what has been done for the projection Hotelling control chart in \cite{change-point-bookchapt15}), which we leave for future work.

\section*{Acknowledgement}

This work is partially supported by NSF grants CCF-1442635 and CMMI-1538746, and an NSF CAREER Award CCF-1650913.

\bibliography{RD_proc}

\clearpage

\appendices

\section{Proofs}

We start by deriving the ARL and EDD for the sketching procedure.
\begin{proof}[Proofs for Theorem \ref{thmARL} and Theorem \ref{thmEDD}]
This analysis demonstrates that the sketching procedure corresponds to the so-called mixture procedure (cf. $T_2$ in \cite{XieSiegmund2012}) in a special case of $p_0 = 1$, $M$ sensors, and the post-change mean vector is $V\transpose \mu$. In \cite{XieSiegmund2012}, Theorem 1, it was shown that the ARL of the mixture procedure with parameter $p_0\in[0, 1]$ and $M$ sensors is given by
\begin{equation}
\mathbb{E}^\infty\{T\} \sim H(M, \theta_0)/ \underbrace{\int_{[2M\gamma(\theta_0)/m_1]^{1/2}}^{[2M\gamma(\theta_0)/m_0]^{1/2}}  y \nu^2(y) dy}_{c'(M, b, w)}, \label{ET1}
\end{equation}
where the detection statistic will search within a time window $m_0\leq t-k\leq m_1$. Let $g(x,p_0) = \log(1-p_0 + p_0e^{x^2/2})$. Then $\psi(\theta) = \log \mathbb{E}\{e^{\theta g(U, p_0)}\}$ is the log moment generating function (MGF) for $g(U, p_0)$, $U\sim \mathcal{N}(0, 1)$,
 $\theta_0$ is the solution to $\dot{\psi}(\theta) = b/M$,
\begin{equation} \label{abbrev}
H(M,\theta) = \frac {\theta
[2\pi  \ddot{\psi}(\theta)]^{1/2}}{ \gamma(\theta) M^{1/2}}
\exp\{M[\theta \dot{\psi}(\theta) - \psi(\theta)]\},
\end{equation}
and
\[
\gamma(\theta) = \frac{1}{2}\theta^2
\mathbb{E}\{[\dot{g}(U, p_0)]^2 \exp[\theta g(U, p_0) - \psi(\theta)]\}.
\]
Note that $U^2$ is $\chi^2_1$ distributed, whose MGF is given by $\mathbb{E}\{\e^{\theta U^2}\}=1/\sqrt{1-2\theta}$. Hence, when $p_0 = 1$,
\[
\psi(\theta) = \log\mathbb{E}\{e^{\theta U^2/2}\}
 = -\frac{1}{2}\log(1-\theta).
\]
The first-order and second-order derivative of the log MGF are given by, respectively,
\ben
\dot{\psi}(\theta) = \frac{1}{2(1-\theta)}, \quad
\ddot{\psi}(\theta) =  \frac{1}{2(1-\theta)^2}
\een
Set $\dot{\psi}(\theta_0) = b/M$. We obtain the solution that $1-\theta_0 = M/(2b)$, and $\theta_0 = 1-M/(2b)$. Hence, $\ddot{\phi}(\theta_0) = 2b^2/M^2$.
We have $g(x,1) = x^2/2$, and $\dot{g}(x, 1) = x$.
\begin{align*}
\gamma(\theta) 
& = \frac{\theta^2}{2} \mathbb{E}\{U^2 e^{\frac{\theta U^2}{2}}\} e^{\log\sqrt{1-\theta}} \\
&= \frac{\theta^2}{2} \cdot \frac{1}{(1-\theta)^{3/2}} \cdot \sqrt{1-\theta} = \frac{\theta^2}{2(1-\theta)},
\end{align*}
where
\begin{align*}
\mathbb{E}\{U^2 e^{\frac{\theta U^2}{2}}\}
=& \frac{1}{\sqrt{2\pi}} \int x^2 e^{\frac{\theta x^2}{2}} e^{-\frac{x^2}{2}} dx \\
=& \frac{1}{\sqrt{2\pi}} \int x^2 e^{-\frac{x^2}{2/(1-\theta)}} dx = \frac{1}{(1-\theta)^{3/2}}.
\end{align*}
Combining the above, we have that the ARL of the sketching procedure is given by
\begin{equation}
\begin{split}
\mathbb{E}^\infty \{T\}
&= \frac{\theta_0 [2\pi\cdot \frac{1}{2(1-\theta_0)^2}]^{1/2}}
{c'(M, b, w)\frac{\theta_0^2}{2(1-\theta_0)}\sqrt{M}} e^{\frac{M\theta_0}{2(1-\theta_0)}}(1-\theta_0)^{M/2} + o(1)\\
&= \frac{\sqrt{\pi}}{c'(M, b, w)}\frac{2}{\theta_0\sqrt{M}} e^{\frac{M\theta_0}{2(1-\theta_0)}}(1-\theta_0)^{M/2} + o(1).
\end{split}
\label{ET_1}
\end{equation}
Next, using the fact that $1/(1-\theta_0) = 2b/M$, we have that the two terms in the above expression can be written as
\[
\frac{M\theta_0}{2(1-\theta_0)} = \frac{M\theta_0}{2} \frac{2b}{M} = \theta_0 b, \quad (1-\theta_0) = \frac{M}{2b},
\]
then (\ref{ET_1}) becomes
\begin{align*}
&\mathbb{E}^\infty\{T\}=\frac{\sqrt{\pi}}{c'(M, b, w)} \frac{2}{\theta_0 \sqrt{M}} e^{\theta_0 b} (\frac{M}{2b}
)^{\frac{M}{2}} + o(1)\\
=& \frac{2\sqrt{\pi}}{c'(M, b, w)} \frac{1}{\sqrt{M}} \frac{1}{1-\frac{M}{2b}} e^{(b-\frac{M}{2})}  (\frac{M}{2b}
)^{\frac{M}{2}} + o(1)
\end{align*}
\begin{align*}
=& \frac{2\sqrt{\pi}}{c'(M, b, w)} \frac{1}{\sqrt{M}} \frac{1}{1-\frac{M}{2b}}
(\frac{M}{2})^{\frac{M}{2}} b^{-\frac{M}{2}} e^{b-\frac{M}{2}} + o(1).
\end{align*}
Finally, note that we can also write
\[\gamma(\theta_0) = \theta_0^2/[2(1-\theta_0)] = (1-M/(2b))^2/(M/b),\] and the constant is
\begin{equation}
\begin{split}
c'(M, b, w) =& \int_{[2M\gamma(\theta_0)/w]^{1/2}}^{[2M\gamma(\theta_0)]^{1/2}}  y \nu^2(y) dy \\
=& \int_{\sqrt{\frac{2b}{w}}(1-\frac{M}{2b})}^{\sqrt{2b}(1-\frac{M}{2b})} y \nu^2(y) dy.
\end{split}
\end{equation}
We are done deriving the ARL. The EDD can be derived by applying Theorem 2 of \cite{XieSiegmund2012} in the case where $\Delta = \|V\transpose\mu\|$, the number of sensors is $M$, and $p_0 = 1$.
\end{proof}

The following proof is for the Gaussian random matrix $A$.

\begin{proof}[Proof of Theorem \ref{mean_concentration}]
It follows from (\ref{quotient_result}), and a standard result concerning the distribution function of the beta distribution~\cite[26.5.3]{handbook}, that
\begin{equation}\label{dist_fn}
\mathbb{P}\{\Gamma\le b\}=I_b\left(\frac{M}{2}, \frac{N-M}{2}\right),
\end{equation}
where $I$ is the regularized incomplete beta function (RIBF)~\cite[6.6.2]{handbook}. We first prove the lower bound in (\ref{concentration_result}). Assuming $N\rightarrow\infty$ such that (\ref{delta_def}) holds, we may combine (\ref{dist_fn}) with \cite[Theorem 4.18]{mythesis} to obtain
\begin{equation*}
\begin{split}
&\displaystyle\lim_{(M, N)\rightarrow \infty}\frac{1}{N} \ln\mathbb{P}\{\Gamma\le\delta-\epsilon\}\\
&= - \left[\delta \ln \left(\frac{\delta}{\delta-\epsilon}\right) + (1-\delta)\ln\left(\frac{1-\delta}{1-\delta + \epsilon}\right)\right] = -c < 0,
\end{split}
\end{equation*}
from which it follows that there exists $\tilde{N}$ such that, for all $N\geq\tilde{N}$,
$$\frac{1}{N} \ln\mathbb{P}\{\Gamma\le\delta-\epsilon\}<-\frac{c'}{2},$$
which rearranges to give
$$\mathbb{P}\{\Gamma\le\delta-\epsilon\} <e^{\frac{-c'N}{2}},$$
which proves the lower bound in (\ref{concentration_result}). To prove the upper bound, it follows from (\ref{dist_fn}) and a standard property of the RIBF~\cite[6.6.3]{handbook} that
\begin{equation}\label{comp_dist_fn}
\mathbb{P}\{\Gamma\geq b\}=I_{1-b}\left(\frac{N-M}{2}, \frac{M}{2}\right).
\end{equation}
Assuming $N\rightarrow\infty$ such that (\ref{delta_def}) holds, we may combine (\ref{comp_dist_fn}) with \cite[Theorem 4.18]{mythesis} to obtain
\begin{equation*}
\begin{split}
&\displaystyle\lim_{(M, N)\rightarrow \infty}\frac{1}{N} \ln\mathbb{P}\{\Gamma\geq\delta+\epsilon\}\\
&= - \left[(1-\delta)\ln \left(\frac{1-\delta}{1-\delta-\epsilon}\right) + \delta\ln\left(\frac{\delta}{\delta + \epsilon}\right)\right] = -d < 0,
\end{split}
\end{equation*}
and the argument now proceeds analogously to that for the lower bound.
\end{proof}

\begin{lemma}\label{lem:2norm}
If a 0-1 matrix $A$ has constant column sum $d$, 
for every non-negative vector $x$ such that $[x]_i \geq 0$,  we have 
\begin{equation}\label{eqn:lb}
\|A x\|_2 \geq \sqrt{d}\|x \|_2.
\end{equation}
\end{lemma}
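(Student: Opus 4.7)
The proof is a direct computation on $\|Ax\|_2^2 = x\transpose A\transpose A x$, exploiting two special features: (i) $A$ has $0$-$1$ entries with constant column sum $d$, and (ii) $x$ is entrywise nonnegative.

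First, I would expand the Gram matrix entrywise. The diagonal entries of $A\transpose A$ are
\[(A\transpose A)_{ii} = \sum_{k=1}^M A_{ki}^2 = \sum_{k=1}^M A_{ki} = d,\]
where the second equality uses $A_{ki}\in\{0,1\}$ (so $A_{ki}^2 = A_{ki}$) and the third uses the constant column-sum hypothesis. The off-diagonal entries satisfy
\[(A\transpose A)_{ij} = \sum_{k=1}^M A_{ki}A_{kj} \geq 0,\]
again because every $A_{ki} \in \{0,1\}$.

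Next, I would write
\[\|Ax\|_2^2 = x\transpose A\transpose A x = \sum_{i=1}^N (A\transpose A)_{ii} [x]_i^2 + \sum_{i\neq j} (A\transpose A)_{ij} [x]_i [x]_j.\]
The first sum equals $d\|x\|_2^2$ by the diagonal computation. Now the key use of the hypothesis $[x]_i\geq 0$: every cross term $[x]_i[x]_j$ is nonnegative, and every coefficient $(A\transpose A)_{ij}$ is nonnegative, so the off-diagonal sum is nonnegative. Therefore
\[\|Ax\|_2^2 \geq d\|x\|_2^2,\]
and taking square roots gives the claimed bound \eqref{eqn:lb}.

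There is essentially no obstacle here; the proof is short once one observes that $A\transpose A$ has an entrywise nonnegative structure and constant diagonal $d$. The nonnegativity assumption on $x$ is what makes the off-diagonal terms work in our favor (otherwise one would only get a bound in terms of the smallest eigenvalue of $A\transpose A$, which could be strictly less than $d$ when $A$ has overlapping column supports). This bound will then feed into the proof of Theorem \ref{thm:expander} by combining with the expansion property to control $\|V\transpose\mu\|^2 = \|A\transpose(AA\transpose)^{-1}A\mu\|^2$ relative to $\|\mu\|^2$.
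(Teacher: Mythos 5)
Your proof is correct and is essentially the paper's own argument: the paper expands $\|Ax\|_2^2 = \sum_{i=1}^M \bigl(\sum_{j=1}^N A_{ij}x_j\bigr)^2$ row by row and lower-bounds each square of a sum of nonnegative terms by the sum of squares, which discards exactly the same nonnegative cross terms $A_{ij}A_{il}x_jx_l$ that you discard as the off-diagonal part of the quadratic form $x\transpose A\transpose A x$. The remaining diagonal contribution $\sum_j x_j^2 \sum_i A_{ij}^2 = d\|x\|_2^2$ (using $A_{ij}^2=A_{ij}$ and the constant column sum) is identical in both write-ups, so there is nothing to add.
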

\begin{proof}[Proof of Lemma \ref{lem:2norm}]
Below, $A_{ij} = [A]_{ij}$.
\begin{equation*}
\begin{split}
\|A x \|_2^2 & =  \sum_{i=1}^M \left(\sum_{j=1}^N A_{ij}x_j\right)^2 \\  & \geq   \sum_{i=1}^M \sum_{j=1}^N (A_{ij}x_j)^2  = d \|x \|_2^2.
\end{split}
              \end{equation*}
\end{proof}

\begin{lemma}[Bounding $\sigma_{\max}(A)$]\label{lem:sigma}
If $A$ corresponds to a $(s, \epsilon)$-expander with regular degree $d$ and regular left degree $c$, for any nonnegative vector $x$,
\begin{equation}
\frac{\|A x\|_2}{\|x\|_2} \leq d\sqrt{\frac{N}{M}},
\end{equation}
thus,
\begin{equation}\label{eqn:sigmamax}
\sigma_{\max}(A) \leq d \sqrt\frac{N}{M}.
\end{equation}
\end{lemma}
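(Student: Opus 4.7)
The plan is to combine a row-wise Cauchy--Schwarz estimate with the bipartite edge-counting identity relating the left and right degrees of $A$. As a preliminary reduction, because $A$ has nonnegative entries, for any $x\in\mathbb{R}^N$ we have $\lvert [Ax]_i\rvert \leq [A\lvert x\rvert]_i$ componentwise, so $\|Ax\|_2 \leq \|A\lvert x\rvert\|_2$ while $\|\lvert x\rvert\|_2 = \|x\|_2$. Hence establishing the displayed inequality for nonnegative vectors is enough to deduce the operator-norm bound $\sigma_{\max}(A)\leq d\sqrt{N/M}$ by supremizing over unit-norm $x$.

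For nonnegative $x$, I would write $[Ax]_i = \sum_j A_{ij}x_j$ and invoke Cauchy--Schwarz, using the fact that $A_{ij}^2 = A_{ij}$, to get
\[
\Bigl(\sum_j A_{ij}x_j\Bigr)^{\!2} \;\leq\; \Bigl(\sum_j A_{ij}\Bigr)\Bigl(\sum_j A_{ij}x_j^2\Bigr) \;=\; r\sum_j A_{ij}x_j^2,
\]
where $r$ denotes the (constant) row sum of $A$. Summing over $i$ and interchanging the order of summation then bounds $\|Ax\|_2^2$ by $r\,d\,\|x\|_2^2$, where $d$ is the (constant) column sum of $A$.

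To close the argument I would use the bipartite edge-counting identity $Nd = Mr$, obtained by counting edges of the bipartite graph from the variable side and the check side, to rewrite $rd = Nd^2/M$. This yields $\|Ax\|_2 \leq d\sqrt{N/M}\,\|x\|_2$, and the bound on $\sigma_{\max}(A)$ then follows from the preliminary reduction. No step presents a substantive obstacle; the only point requiring care is matching the symbols $d$ and $c$ in the statement of the lemma to the row versus column sums of $A$ so that the edge-counting identity is applied with the correct orientation. Note that the expander property itself is not used in this bound---only the regularity of the degrees and the 0-1 structure of $A$ are needed.
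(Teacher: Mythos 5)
Your proof is correct and reaches the paper's bound by essentially the same mechanism: a row-wise quadratic estimate giving $\|Ax\|_2^2 \le (\text{row sum})\cdot(\text{column sum})\cdot\|x\|_2^2$, followed by the bipartite edge-count identity to rewrite that product as $d^2N/M$. The paper expands $\bigl(\sum_j A_{ij}x_j\bigr)^2$ into diagonal and cross terms, bounds each cross term by $\frac{1}{2}(x_j^2+x_l^2)$, and then counts the triples $(i,j,l)$ with $A_{ij}=A_{il}=1$ using the degree regularity; your single application of Cauchy--Schwarz with $A_{ij}^2=A_{ij}$ packages exactly that computation more compactly. Your caveat about matching symbols is well placed: in the paper's convention the column sum is $d$ and the row sum is $c$, with $Nd=Mc$, so your $r$ is the paper's $c$ and $rd=Nd^2/M$ as you claim. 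One point where your write-up is more careful than the paper's: the paper establishes the Rayleigh-quotient bound only for nonnegative $x$ and then asserts $\sigma_{\max}(A)\le d\sqrt{N/M}$ ``from the definition of $\sigma_{\max}$,'' which requires the bound over all of $\mathbb{R}^N$; your preliminary reduction $\lvert[Ax]_i\rvert\le[A\lvert x\rvert]_i$ (or, just as well, the observation that your Cauchy--Schwarz step never uses the sign of $x$) closes that small gap. Your closing remark is also accurate: the expansion property is not used in this lemma, only the 0-1 structure and the degree regularity.
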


\begin{proof}[Proof of Lemma \ref{lem:sigma}]
For any nonnegative vector $x$,
\begin{eqnarray}
\|Ax\|_2^2 & = & \sum_{i=1}^M (\sum_{j=1}^N A_{ij}x_j)^2 \nonumber \\
              & = & \sum_{i=1}^M \big(\sum_{j=1}^N (A_{ij}x_j)^2 + \sum_{j=1}^N \sum_{l=1, l \leq j}^N (A_{ij}A_{il}x_jx_l) \big)  \nonumber \\
               & \leq & \sum_{i=1}^M \big(\sum_{j=1}^N (A_{ij}x_j)^2 + \sum_{j=1}^N \sum_{l=1, l \leq j}^N \frac{A_{ij}A_{il}}{2}(x_j^2+x_l^2) \big) \nonumber 
                                             \end{eqnarray}
                              \begin{eqnarray}
               & = & \sum_{i=1}^M \sum_{j=1}^N \sum_{l=1}^N \frac{A_{ij}A_{il}}{2}(x_j^2+x_l^2)   \nonumber\\
                              & = &  \sum_{j=1}^N \sum_{l=1}^N \sum_{i=1}^M \frac{A_{ij}A_{il}}{2}(x_j^2+x_l^2)   \nonumber \\
                              & \leq&\sum_{j=1}^N  dc(x_j)^2 
                               \label{eqn:left}\\
                               & = & \frac{d^2N}{M}\|x\|_2^2. \label{eqn:2norm}
\end{eqnarray}
Above, (\ref{eqn:left}) holds since  
for a given column $j$, $A_{ij}=1$ holds for exactly $d$ rows. And for each row $i$ of these $d$ rows, $A_{il}=1$ for exactly $c$ columns with $l \in \{1,\ldots,p\}$;  
(\ref{eqn:2norm}) holds since $dN=Mc$. Finally, from the definition of $\sigma_{\max}$, (\ref{eqn:sigmamax}) holds.
\end{proof}

\begin{proof}[Proof for Theorem \ref{thm:expander}]
Note that
\begin{eqnarray} \label{eqn:norm}
\Delta &= &(\mu \transpose VV\transpose \mu)^{1/2}
   =  (\mu\transpose A\transpose U \Sigma^{-2} U\transpose A  \mu)^{1/2}  \nonumber\\
   & \geq & \sigma_{\max}^{-1}(A)\|U\transpose A  \mu\|_2     =  \sigma_{\max}^{-1}(A)\| A  \mu\|_2, \label{eqn:Delta}
\end{eqnarray}
where $\sigma_{\max}=\sigma_{\max}(A)$, and (\ref{eqn:Delta}) holds since $U$ is a unitary matrix.
Thus, in order to bound $\Delta$, we need to characterize $\sigma_{\max}$, as well as $\| A \mu\|_2$  for a $s$ sparse vector $\mu$.
Combining (\ref{eqn:Delta}) with Lemma \ref{lem:2norm} and \ref{lem:sigma}, we have that for every nonnegative vector $\mu$, $[\mu]_i \geq 0$,
\begin{equation}\label{eqn:Deltabound}
\Delta \geq  \frac{1}{d} \sqrt\frac{M}{N}  \sqrt{d(1-\epsilon)}\| \mu \|_2= \sqrt{\frac{M(1-\epsilon)}{dN}}\|\mu \|_2.
\end{equation}
Finally, Lemma \ref{lem:goodexpander} characterizes the quantity $[M(1-\epsilon)/(dN)]^{1/2}$ in (\ref{eqn:Deltabound}) and establishes the existence of such an expander graph. When $A$ corresponds to an  $(\alpha N, \epsilon)$ expander described in Lemma \ref{lem:goodexpander}, $\Delta \geq  \|\beta  \mu \|_2$ for all non-negative signals $[\mu]_i \geq 0$ for some constant $\alpha$ and some constant $\beta = (\rho(1-\epsilon)/d)^{1/2}$. Done.
\end{proof}

\begin{proof}[Proof for Corollary \ref{M_over_b_range}]
We define that $x \triangleq M/b$, then Theorem \ref{thmARL} tells us that when $M$ goes to infinity, we have that 
\begin{equation}
\begin{split}
&\mathbb{E}^\infty\{T\} = \\
&\frac{2\sqrt{\pi}}{c(M, x, w) }  \frac{1}{1-\frac{x}{2}} \frac{1}{\sqrt{M}}
\left(\frac{x}{2}\right)^{\frac{M}{2}} \exp\left(\frac{M}{x}-\frac{M}{2}\right)+ o(1), \label{ET1}
\end{split}
\end{equation}
where
\begin{equation}
c(M, x, w)  = \int_{\sqrt{\frac{2M}{xw}}(1-\frac{x}{2})}^{\sqrt{\frac{2M}{x}}(1-\frac{x}{2})} u \nu^2(u) du, 
\end{equation}
and
\[
\nu(u) \approx \frac{2/u[\Phi(u/2) - 0.5]}{(u/2)\Phi(u/2) + \phi(u/2)}.
\]

Define that $\gamma \triangleq \mathbb{E}^{\infty}\{T\}$. One claim is that when $M>24.85$ and $\gamma \in [e^5, e^{20}]$ there exists one $x^* \in (0.5,2)$ such that (\ref{ET1}) holds. Next, we prove the claim. 

Define the logarithm of the right-hand side of (\ref{ET1}) as follows:
\begin{equation}
\begin{split}\nonumber
p(x) \triangleq& \log(2\sqrt{\pi}) - \log(C(M,x,w)) - \log \left(1-\frac{x}{2} \right) \\ 
&+\frac{M}{2}\log\frac{x}{2} + \frac{M}{x} - \frac{M}{2} -\frac{1}{2}\log M. 
\end{split}
\end{equation}

Since $\nu(u) \rightarrow 1$ as $u \rightarrow 0$ and $\nu(u) \rightarrow \frac{2}{u^2}$ as $u \rightarrow \infty$, we know that $\int_{0}^{\infty} u\nu^2(u)du$ exists. From the numerical integration, we know that $\int_0^{\infty} u\nu^2(u)du <1$. Therefore, $-\log (C(M,x,w)) >0$. Then,
\[
p\left(0.5\right) > \left( \frac{3}{2} - \frac{1}{2}\log 4 \right) M - \frac{1}{2}\log M + \log(2\sqrt{\pi}) - \log \frac{3}{4}. 
\]
When $M>24.85$, we have that $p(0.5) > 20$. Then, when $\gamma < e^{20}$ we have that $p(0.5)-\log \gamma >0$. 

Next, we prove that we can find some $x_0 \in (0.5, 2)$ such that $p(x_0) - \log \gamma <0$ provided that $\gamma > e^5$. 
Since $\phi \left( \frac{u}{2} \right) < 0.5$ and 
\[
0.5 + \frac{1}{\sqrt{2\pi}} \exp\left[ -\frac{1}{2} \left(\frac{u}{2}\right)^2 \right] \left( \frac{u}{2}\right)  \leq  \Phi \left( \frac{u}{2} \right) \leq 1,
\]
for any $u>0$. We have that 
\[
\nu(u) > \sqrt{ \frac{2}{\pi}} \cdot \frac{\exp \left( -\frac{u^2}{8}\right)}{u+1}. 
\]
Then, we have that for any $u>0$, 
\[
u\nu^2(u) > \frac{2}{\pi} \cdot \frac{u \cdot \exp(-u^2/4)}{(u+1)^2}. 
\]

We define that $x_0$ is the solution to the following equation:
\begin{equation}
\sqrt{ \frac{2M}{x} } \left( 1-\frac{x}{2} \right) = 1. 
\label{x0}
\end{equation}
Then, we have that 
\begin{equation} 
\begin{split} \nonumber
C(M,x_0,w) >& \frac{2}{\pi} \cdot \int_{1/\sqrt{w}}^{1}  \frac{u \cdot \exp(-u^2/4)}{(u+1)^2} du \\
>& \frac{2}{\pi} \cdot \int_{1/\sqrt{w}}^{1}  \frac{u \cdot \exp(-u^2/4)}{4} du  \\
=& \frac{1}{\pi} \cdot \left( \exp \left( -\frac{1}{4w} \right) - \exp \left( -\frac{1}{4} \right) \right) \\
>& \frac{1}{\pi}  \exp \left( -\frac{1}{4} \right) \cdot \left( \frac{1}{4} - \frac{1}{4w} \right),
\end{split}
\end{equation}
where the second inequality is due to the fact that the upper bound for the integral interval is $1$ and the third inequality is due to the fact that $\exp(-x)$ is a convex function. Therefore, we have that 
\[
-\log C(M,x_0,w) < \log \pi + \frac{1}{4} - \log \left( \frac{1}{4} - \frac{1}{4w} \right)
\]
Note that the upper bound above for $-\log C(M,x_0,w) $ is not dependent on $M$, which is because we choose a $x_0$ that depends on $M$. 
Solving the equation (\ref{x0}), we have that 
\[
x_0 = 2 + \frac{1}{M} - \sqrt{ \frac{1}{M^2} + \frac{4}{M}}<2, 
\]
and $x_0 \rightarrow 2$ as $M \rightarrow \infty$. 
By Taylor's expansion, we have that $x_0 = 2 - 2M^{-1/2} +M^{-1} + o(M^{-1})$, or $x_0 = 2 - 2M^{-1/2} + o(M^{-1/2})$. 
Then, we have that 
\[
-\log \left(1-\frac{x_0}{2} \right) = -\log (M^{-1/2} )+ o(1),
\]
and
\begin{equation}
\begin{split} \nonumber
\frac{M}{2}\log\frac{x_0}{2} =& \frac{M}{2} \log(1-M^{-1/2}) \\
=& \frac{M}{2}\cdot \left(-M^{-1/2}-\frac{1}{2}M^{-1}+o(M^{-1}) \right) \\
=& -\frac{1}{2} M^{1/2} - \frac{1}{4} + o(1),
\end{split}
\end{equation}
and
\begin{equation}
\begin{split}\nonumber
\frac{M}{x_0} =& \frac{M}{2} \cdot \frac{1}{1-(M^{-1/2}+M^{-1}/2+o(M^{-1}))}\\
=& \frac{M}{2} \cdot (M^{-1/2}+M^{-1}/2+o(M^{-1}) \\
&+ (M^{-1/2}+M^{-1}/2+o(M^{-1}))^2 + o(M^{-1})) \\
=& \frac{1}{2}M^{1/2} + \frac{1}{2} + o(1)
\end{split}
\end{equation}

Combining the above results, we have that 
\begin{equation}
p(x_0) < \log (2\sqrt{\pi}) + \log \pi - \log \left( \frac{1}{4} - \frac{1}{4w} \right) + \frac{1}{2} + o(1). 
\label{upperbound}
\end{equation}
One important observation is that the right-hand side of (\ref{upperbound}) converges as $M \rightarrow \infty$. In fact, $p(x_0)$  as a function of $M$ is decreasing and converges as $M \rightarrow \infty$. Since we set $w\geq 100$, then for any $M>24.85$, $p(x_0) < 5$. Therefore, for any $\gamma > e^5$ and any $M>24.85$, we can find a $x_0$ close to $2$ such that $p(x_0) - \log \gamma <0$. 

Since $p(x)$ is a continuous function, there exists a solution $x^* \in (0.5,2)$ such that equation (\ref{ET1}) holds. 

\end{proof}

\begin{proof}[Proof of Theorem \ref{thm_time_varying}]

The proof uses a similar argument as that in \cite{XieSiegmund2012}.

By law of large number, when $t-k$ tends to infinity, the following sum converges in probability 
\begin{equation}
\frac{1}{t-k}\sum_{i=k+1}^t \mathbb I_{in} \xrightarrow[]{p} r. \label{cen1}
\end{equation} 
Moreover, from central limit theorem, 
\begin{equation}
\frac{1}{\sqrt{t-k}}\sum_{i=k+1}^t [x_i]_n  (\mathbb I_{in}-r) \xrightarrow[]{d} \mathcal{N}(0, r(1-r)). 
\end{equation}
So by continuous mapping theorem, 
\begin{equation}
\left(\frac{1}{\sqrt{(t-k)r(1-r)}}\sum_{i=k+1}^t [x_i]_n  (\mathbb I_{in}-r) \right)^2 \xrightarrow[]{d} \chi^2_1,
\label{cen2}
\end{equation} i.e., the squared and scaled version of the sum is asymptotically a $\chi^2_1$ random variable with one degree of freedom. By Slutsky's theorem, combining (\ref{cen1}) and (\ref{cen2}), 
\[
\frac{1}{1-r}\frac{[\sum_{i=k+1}^t [x_i]_n  (\mathbb I_{in}-r) ]^2}{\sum_{i=k+1}^t \mathbb I_{in}} \xrightarrow[]{d} \chi^2_1
\]
Using Lemma 1 in \cite{LaurentMassart2000}, for $X\sim \chi^2_1$, 
\begin{align*}
&\mathbb{P}\{X \geq 1+2\sqrt{\epsilon} + 2\epsilon\} \leq e^{-\epsilon}\\
&\mathbb{P}\{X \leq 1-2\sqrt{\epsilon} \} \leq e^{-\epsilon}
\end{align*}
Therefore, with probability at least $1-2e^{-\epsilon}$, the difference is bounded by a constant
\[\left(\frac{\sum_{i=k+1}^t [x_i]_n  \mathbb I_{in}}{\sqrt{\sum_{i=k+1}^t \mathbb I_{in}}} - r\frac{\sum_{i=k+1}^t [x_i]_n }{\sqrt{\sum_{i=k+1}^t \mathbb I_{in}}}\right)^2 < (1+2\sqrt{\epsilon}+ 2\epsilon)(1-r). 
\]

On the the hand, by central limit theorem, when $t-k$ tends to infinity, 
\[
\frac{1}{\sqrt{t-k}} \sum_{i=k+1}^t [x_i]_n \xrightarrow[]{d} \mathcal{N}(0, 1).
\]
and by law of large number and continuous mapping theorem 
\[
\left(\frac{\sum_{i=k+1}^t \mathbb I_{in}}{t-k}\right)^{-1/2} - \frac{1}{\sqrt{r}} \xrightarrow[]{p}  0
\]
Hence, invoking Slutsky's theorem again, we have
\[
\left(
\frac{\sum_{i=k+1}^t [x_i]_n}{\sqrt{\sum_{i=k+1}^t \mathbb I_{in}}} - \frac{\sum_{i=k+1}^t [x_i]_n}{\sqrt{r(t-k)}}
\right)^2\xrightarrow[]{d} 0
\]
Hence, combining the above, by a triangle inequality type of argument, we may conclude that, with high-probability, the difference is bounded by a constant $c$
\[
\left(
\frac{\sum_{i=k+1}^t [x_i]_n  \mathbb I_{in}}{\sqrt{\sum_{i=k+1}^t \mathbb I_{in}}} - \sqrt{r}\frac{\sum_{i=k+1}^t [x_i]_n}{\sqrt{(t-k)}}\right)^2<c.\]

Hence, to control the ARL for the procedure defined in (\ref{proc3})
\begin{equation}
\begin{split}
T_{\{0,1\}}= &\\
 \inf\{t: &
\max_{t-w\leq k < t} \frac{1}{2}\sum_{n=1}^N \frac{\left(\sum_{i=k+1}^t [x_i]_n \mathbb I_{in}\right)^2}{\sum_{i=k+1}^t  \mathbb I_{in}} > b\},
\end{split}
\end{equation}
one can approximately consider another procedure
\[
\widetilde{T}_{\{0,1\}} = \inf \{t: \max_{t-w\leq k < t} \frac{1}{2}\sum_{n=1}^N U_{n,k,t}^2 >  \frac{b}{r}  \},
\] with 
\[
U_{n,k,t} \triangleq \frac{\sum_{i=k+1}^t [x_i]_n}{\sqrt{t-k}},
\]
This corresponds to the special case of the mixture procedure with $N$ sensors and all being affected by the change ($p_0 = 1$), except that the threshold is scaled by $1/r$. Hence, we can use the ARL approximation for mixture procedure, which leads to (\ref{ET2}). 

\end{proof}

\section{Justification for EDD of (\ref{DD2})}\label{justification}
\begin{proof}

Below, let $T=T'_{\{0,1\}}$ for simplicity. Define $S_{n,t} = \sum_{i=1}^t [x_i]_n \xi_{ni}$ for any $n$ and $t$.
To obtain an EDD approximation to $T'_{\{0,1\}}$, first we note that 
\begin{equation}
\begin{split}
&\frac{1}{2} \sum_{n=1}^N Z_{n,k,T}^2 
= \frac{1}{2} \sum_{n=1}^N \frac{\left(\sum_{i=k+1}^T [x_i]_n \xi_{ni}\right)^2}{r(T-k)} \\
=& \frac{1}{2} \sum_{n=1}^N \frac{\left(S_{n,T} - S_{n,k} \right)^2 }{r(T-k)} \\ 
=& \sum_{n=1}^N \mu_n\left[ S_{n,T} - S_{n,k} - (T-k) r\mu_n/2  \right] \\
+& \sum_{n=1}^N \left[ S_{n,T} - S_{n,k} - (T-k)r\mu_n \right]^2 / (2r(T-k)).
\end{split}
\end{equation}
Then we can leverage a similar proof as that to Theorem 2 in \cite{XieSiegmund2012} to obtain  that as $b \rightarrow \infty$,
\begin{equation}
\begin{split}
&\mathbb{E}^0\left\{\max_{0\leq k<T} \frac{1}{2} \sum_{n=1}^N Z_{n,k,T}^2\right\} \\
=& \mathbb{E}^0 \left\{\sum_{n=1}^N \mu_n(S_{n,T} - rT\mu_n/2) \right\} \\
&\quad  + \mathbb{E}^0 \left\{\sum_{n=1}^N (S_{n,T} - Tr\mu_n)^2/(2Tr) \right\} \\
&\quad + \mathbb{E}^0 \left\{ \min_{0\leq k<b^{1/2}} \left( \sum_{n=1}^N \mu_n (S_{n,k} - kr\mu_n/2) \right) \right\} + o(1).
\end{split}
\label{eq:threeterms}
\end{equation}
The first term on the right-hand side of (\ref{eq:threeterms}) is equal to $\mathbb{E}^0 \{T'_{\{0, 1\}}\} \cdot r\sum_{n=1}^N \mu_n^2 /2$. Using the fact that random variables $([x_i]_n \xi_{ni} - r\mu_n)/\sqrt{r}$ are i.i.d. with mean zero and unit variance, together with the Anscombe-Doeblin Lemma \cite{Siegmund1985}, we have that as $b \rightarrow \infty$, the second term on the right-hand side of (\ref{eq:threeterms}) is equal to $N/2+o(1)$. The third term can be shown to be small similar to \cite{XieSiegmund2012}. Finally, ignoring the overshoot of the detection statistic exceeding the detection threshold, we can replace the left-hand side of (\ref{eq:threeterms}) with $b$. Solving  the equation, we obtain the first order approximation of the EDD is given by (\ref{DD2}).

\end{proof}

\end{document}